\let\proof\proof
\let\proof\@undefined
\let\endproof\@undefined
\newtheorem{theorem}{Theorem}[section]
\newtheorem{proposition}[theorem]{Proposition}
\newtheorem{definition}[theorem]{Definition}
\newtheorem{remark}[theorem]{Remark}
\newtheorem{problem}[theorem]{Problem}
\newcommand{\ev}{\Diamond}
\newcommand{\gl}{\Box}
\newcommand{\un}{\textrm{ }\mathcal U}
\newcommand{\notltl}{\neg}
\newcommand{\andltl}{\wedge}
\newcommand{\orltl}{\vee}
\newcommand{\nextltl}{\bigcirc}
\newcommand{\bo}{{\bf 1}}
\newcommand{\n}{\nonumber\\}
\newcommand{\be}{\begin{equation}}
\newcommand{\ee}{\end{equation}}
\newcommand{\ben}{\begin{equation*}}
\newcommand{\een}{\end{equation*}}
\newcommand{\bea}{\begin{eqnarray}}
\newcommand{\eea}{\end{eqnarray}}
\newcommand{\bean}{\begin{eqnarray*}}
\newcommand{\eean}{\end{eqnarray*}}
\newcommand{\ba}{\begin{array}}
\newcommand{\ea}{\end{array}}
\newcommand{\leftm}{\left[\begin{array}}
\newcommand{\rightm}{\end{array}\right]}
\newcommand{\ie}{{\it i.e., }}
\newcommand{\eg}{{\it e.g., }}
\DeclareMathOperator*{\argmin}{arg\,min}
\newcommand\oprocendsymbol{\hbox{$\bullet$}}
\newcommand\oprocend{\relax\ifmmode\else\unskip\hfill\fi\oprocendsymbol}
 \title{MDP Optimal Control under Temporal Logic Constraints \\ - Technical Report -}
\author{Xu Chu Ding \qquad Stephen L. Smith \qquad Calin Belta \qquad Daniela Rus\thanks{This work was supported in part by ONR-MURI N00014-09-1051, ARO W911NF-09-1-0088, AFOSR YIP FA9550-09-1-020, and NSF CNS-0834260.}
\thanks{X. C. Ding and C. Belta are with Department of Mechanical Engineering, Boston University, Boston, MA 02215, USA (email: {\{xcding; cbelta\}@bu.edu)}.   S. L. Smith is with the Department of Electrical and Computer Engineering, University of Waterloo, Waterloo ON, N2L 3G1 Canada (email: stephen.smith@uwaterloo.ca).  D. Rus is with the Computer Science and Artificial Intelligence Laboratory, Massachusetts Institute of Technology, Cambridge, MA 02139, USA  (email: rus@csail.mit.edu).}
}
\begin{document}
\maketitle \thispagestyle{empty} \pagestyle{empty}

\begin{abstract} In this paper, we develop a method to automatically generate a control policy for a dynamical system modeled as a Markov Decision Process (MDP).  The control specification is given as a Linear Temporal Logic (LTL) formula over a set of propositions defined on the states of the MDP.  We synthesize a control policy such that the MDP satisfies the given specification almost surely, if such a policy exists. In addition, we designate an ``optimizing proposition'' to be repeatedly satisfied, and we formulate a novel optimization criterion in terms of minimizing the expected cost in between satisfactions of this proposition. We propose a sufficient condition for a policy to be optimal, and develop a dynamic programming algorithm that synthesizes a policy that is optimal under some conditions, and sub-optimal otherwise. This problem is motivated by robotic applications requiring persistent tasks, such as environmental monitoring or data gathering, to be performed. 
\end{abstract}

\section{Introduction}
\label{sec:intro}

In this paper, we consider the problem of controlling a (finite state) Markov Decision Process (MDP). Such models are widely used in various areas including engineering, biology, and economics. 
In particular, in recent results, they have been successfully used to model and control autonomous robots subject to uncertainty in their sensing and actuation, such as for ground robots \cite{LaWaAnBe-ICRA10}, unmanned aircraft \cite{temizercollision} and surgical steering needles \cite{alterovitz2007stochastic}.


Several authors
\cite{Hadas-ICRA07,Karaman_mu_09,Loizou04,Tok-Ufuk-Murray-CDC09} have
proposed using temporal logics, such as Linear Temporal Logic (LTL)
and Computation Tree Logic (CTL) \cite{Clarke99}, as specification
languages for control systems.  Such logics are appealing because they
have well defined syntax and semantics, which can be easily used to
specify complex behavior. In particular, in LTL, it is possible to
specify persistent tasks, {\it e.g.,} ``Visit regions $A$, then $B$,
and then $C$, infinitely often. Never enter $B$ unless coming directly
from $D$.''  In addition, off-the-shelf model checking
algorithms~\cite{Clarke99} and temporal logic game
strategies~\cite{Piterman-2006} can be used to verify the correctness
of system trajectories and to synthesize provably correct control
strategies.


The existing works focusing on LTL assume that a finite model of the
system is available and the current state can be precisely
determined. If the control model is deterministic ({\it i.e.,} at each
state, an available control enables exactly one transition), control
strategies from specifications given as LTL formulas can be found
through simple adaptations of off-the-shelf model checking algorithms
\cite{KB-TAC08-LTLCon}. If the control is non-deterministic (an
available control at a state enables one of several transitions, and
their probabilities are not known), the control problem from an LTL
specification can be mapped to the solution of a B\"uchi or GR(1) game
if the specification is restricted to a fragment of
LTL~\cite{Hadas-ICRA07,KlBe-HSCC08-book}. If the probabilities of the
enabled transitions at each state are known, the control problem
reduces to finding a control policy for an MDP such that a
probabilistic temporal logic formula is satisfied \cite{de1997formal}.

By adapting methods from probabilistic model-checking
\cite{baier2008principles,de1997formal,vardi1999probabilistic}, we
have recently developed frameworks for deriving MDP control policies
from LTL formulas~\cite{IFAC2011_LTL}, which is related to a number of
other approaches \cite{courcoubetis1998markov,baier2004controller}
that address the problem of synthesizing control policies for MDPs
subject to LTL satisfaction constraints.  In all of the above
approaches, a control policy is designed to maximize the probability
of satisfying a given LTL formula.  However, no attempt has been made
so far to optimize the long-term behavior of the system, while
enforcing LTL satisfaction guarantees.  Such an objective is often
critical in many applications, such as surveillance, persistent
monitoring, and pickup delivery tasks, where a robot must repeatedly
visit some areas in an environment and the time in between revisits
should be minimized.




As far as we know, this paper is the first attempt to compute an
optimal control policy for a dynamical system modeled as an MDP, while
satisfying temporal logic constraints.  This work begins to bridge the
gap between our prior work on MDP control policies maximizing the
probability of satisfying an LTL formula~\cite{IFAC2011_LTL} and
optimal path planning under LTL
constraints~\cite{SLS-JT-CB-DR:10b}. We consider LTL formulas defined
over a set of propositions assigned to the states of the MDP. We
synthesize a control policy such that the MDP satisfies the
specification almost surely, if such a policy exists. In addition, we
minimize the expected cost between satisfying instances of an
``optimizing proposition.''

The main contribution of this paper is two-fold.  First, we formulate the above MDP optimization problem in terms of minimizing the average cost per
cycle, where a cycle is defined between successive satisfaction of the optimizing 
proposition. We present a novel connection between this problem
and the well-known average cost per stage problem. Second, we incorporate the LTL constraints and obtain a
sufficient condition for a policy to be optimal.  We present a dynamic
programming algorithm that under some (heavy) restrictions synthesizes an optimal control policy, and a sub-optimal policy otherwise.

The organization of this paper is as follows. In
Sec.~\ref{sec:prelim} we provide some preliminaries. We formulate the problem 
in Sec.~\ref{sec:problem_formulation} and we formalize the connection between the average cost per cycle and the average cost per stage problem in Sec.~\ref{sec:solutiontoaveragecycle}. 
In Sec.~\ref{sec:solveLTLprob}, we provide a
method for incorporating LTL constraints. We present a case study illustrating our framework in Sec.~\ref{sec:casestudy} and we conclude in Sec.~\ref{sec:conc}



\section{Preliminaries}
\label{sec:prelim}
%

\subsection{Linear Temporal Logic}
We employ Linear Temporal Logic (LTL) to describe MDP control specifications. A detailed description of the syntax and semantics of LTL is beyond the scope of this paper and can be found in \cite{baier2008principles,Clarke99}. 
Roughly, an LTL formula is built up from a set of atomic propositions $\Pi$, standard Boolean operators $\notltl$ (negation), $\orltl$ (disjunction), $\andltl$ (conjunction), and temporal operators $\nextltl$ (next), $\un$ (until), $\ev$ (eventually), $\gl$ (always). The semantics of LTL formulas are given over infinite words in $2^{\Pi}$. A word satisfies an LTL formula $\phi$ if $\phi$ is true at the first position of the word; $\gl \phi$ means that $\phi$ is true at all positions of the word; $\ev \phi$ means that $\phi$ eventually becomes true in the word;  $\phi_{1}\un\phi_{2}$ means that $\phi_1$ has to hold at least until $\phi_2$ is true.  More expressivity can be achieved by combining the above temporal and Boolean operators (more examples are given later).  
An LTL formula can be represented by a deterministic \emph{Rabin automaton}, which is defined as follows.
\begin{definition}[Deterministic Rabin Automaton]
\label{def:DRA}
A deterministic Rabin automaton (DRA) is a tuple $\mathcal R=(Q,\Sigma,\delta,q_{0},F)$, where (i) $Q$ is a finite set of states; (ii) $\Sigma$ is a set of inputs (alphabet); (iii) $\delta:Q\times\Sigma\rightarrow Q$ is the transition function; (iv) $q_{0}\in Q$ is the initial state; and (v) $F=\{(L(1),K(1)),\dots,(L(M),K(M))\}$ is a set of pairs of sets of states such that $L(i),K(i)\subseteq Q$ for all $i=1,\dots,M$.
\end{definition}

A run of a Rabin automaton $\mathcal R$, denoted by $r_{\mathcal R}=q_{0}q_{1}\ldots$, is an infinite sequence of states in $\mathcal R$ such that for each $i\geq 0$, $q_{i+1}\in\delta(q_{i},\alpha)$ for some $\alpha \in \Sigma$.   A run $r_{\mathcal R}$ is {\it accepting} if there exists a pair $(L,K)\in F$ such that 1) there exists $n\geq 0$, such that for all $m\geq n$, we have $q_{m}\notin L$, and 2) there exist infinitely many indices $k$ where $q_{k}\in K$.   
This acceptance conditions means that $r_{\mathcal R}$ is accepting if for a pair $(L,K)\in F$, $r_{\mathcal R}$ intersects with $L$ finitely many times and $K$ infinitely many times.  Note that for a given pair $(L,K)$, $L$ can be an empty set, but $K$ is not empty.

For any LTL formula $\phi$ over $\Pi$, one can construct a DRA with input alphabet $\Sigma= 2^{\Pi}$ accepting all and only words over $\Pi$ that satisfy $\phi$ (see \cite{gradel2002automata}).  We refer readers to \cite{klein2006experiments
} and references therein for algorithms and to freely available implementations, such as \cite{ltl2dstar}, to translate a LTL formula over $\Pi$ to a corresponding DRA.

\subsection{Markov Decision Process and probability measure}
\label{subsec:MDPandprobmeasure}
\begin{definition}[Labeled Markov Decision Process]
\label{def:MDP}
A labeled Markov decision process (MDP) is a tuple $\mathcal M=(S, U, P, s_{0}, \Pi, \mathcal L, g)$, where $S=\{1,\ldots,n\}$ is a finite set of states; $U$ is a finite set of controls (actions) (with slight abuse of notations we also define the function $U(i)$, where $i\in S$ and $U(i)\subseteq U$ to represent the available controls at state $i$); $P: S\times U\times S\rightarrow [0,1]$ is the transition probability function such that for all $i\in S$,
$\sum_{j\in S}P(i,u,j) = 1$ if $u\in U(i)$, and $P(i,u,j)=0$ if $u\notin U(i)$; $s_{0}\in S$ is the initial state; $\Pi$ is a set of atomic propositions; $\mathcal L: S\rightarrow 2^{\Pi}$ is a labeling function and $g:S \times U\rightarrow \mathbb R^{+}$ is such that $g(i,u)$ is the expected (non-negative) cost when control $u\in U(i)$ is taken at state $i$.
\end{definition}

We define a control function $\mu:S\rightarrow U$ such that $\mu(i)\in U(i)$ for all $i\in S$.  A infinite sequence of control functions $M=\{\mu_{0},\mu_{1},\ldots\}$ is called a \emph{policy}.  One can use a policy to resolve all non-deterministic choices in an MDP by applying the action $\mu_{k}(s_{k})$ at state $s_{k}$.  Given an initial state $s_{0}$, an infinite sequence $r^{M}_{\mathcal M}=s_{0}s_{1}\ldots$ on $\mathcal M$ generated under a policy $M$ is called a \emph{path} on $\mathcal M$ if $P(s_{k},\mu_{k}(s_{k}),s_{k+1})>0$ for all $k$.  The index $k$ of a path is called \emph{stage}.  
If $\mu_{k}=\mu$ for all $k$, then we call it a \emph{stationary} policy and we denote it simply as $\mu$.  A stationary policy $\mu$ induces a Markov chain where its set of states is $S$ and the transition probability from state $i$ to $j$ is $P(i,\mu(i),j)$.


We define $\mathrm{Paths}^{M}_{\mathcal M}$ and $\mathrm{FPaths}^{M}_{\mathcal M}$ as the set of all infinite and finite paths of $\mathcal M$ under a policy $M$, respectively.  
We can then define a probability measure over the set $\mathrm{Paths}^{M}_{\mathcal M}$.  For a path $r_{\mathcal M}^{M}=s_{0}s_{1}\ldots s_{m}s_{m+1}\ldots \in\mathrm{Paths}^{M}_{\mathcal M}$, the \emph{prefix} of length $m$ of $r_{\mathcal M}^{M}$ is the finite subsequence $s_{0}s_{1}\ldots s_{m}$.  Let $\mathrm{Paths}^{M}_{\mathcal M}(s_{0}s_{1}\ldots s_{m})$ denote the set of all paths in $\mathrm{Paths}^{M}_{\mathcal M}$ with the prefix $s_{0}s_{1}\ldots s_{m}$.  (Note that $s_{0}s_{1}\ldots s_{m}$ is a finite path in $\mathrm{FPaths}^{M}_{\mathcal M}$.)  Then, the probability measure $\textrm{Pr}^{M}_{\mathcal M}$ on the smallest $\sigma$-algebra over $\mathrm{Paths}^{M}_{\mathcal M}$ containing $\mathrm{Paths}^{M}_{\mathcal M}(s_{0}s_{1}\ldots s_{m})$ for all $s_{0}s_{1}\ldots s_{m}\in \mathrm{FPaths}^{M}_{\mathcal M}$ is the unique measure satisfying:
\bean
\textrm{Pr}^{M}_{\mathcal M}\{\mathrm{Paths}^{M}_{\mathcal M}(s_{0}s_{1}\ldots s_{m})\}=\prod_{0\leq k < n} P(s_{k},\mu_{k}(s_{k}),s_{k+1}).
\eean

Finally, we can define the probability that an MDP $\mathcal M$ under a policy $M$ satisfies an LTL formula $\phi$.  
A path $r^{M}_{\mathcal M}=s_{0}s_{1}\ldots$ deterministically generates a word $o=o_{0}o_{1}\ldots$,
where $o_{i}=\mathcal{L}(s_{i})$ for all $i$. With a slight abuse of notation, we denote $\mathcal{L}(r^{M}_{\mathcal M})$ as the word generated by $r^{M}_{\mathcal M}$.  Given an LTL formula $\phi$, one can show that the set $\{r^{M}_{\mathcal M} \in \mathrm{Paths}^{M}_{\mathcal M} : \mathcal{L}(r^{M}_{\mathcal M}) \vDash \phi\}$ is measurable.  We define:
\be
\label{eq:probofformula}
\textrm{Pr}^{M}_{\mathcal M}(\phi):=\textrm{Pr}^{M}_{\mathcal M}\{r^{M}_{\mathcal M} \in \mathrm{Paths}^{M}_{\mathcal M} : \mathcal{L}(r^{M}_{\mathcal M}) \vDash \phi\} \ee as the probability of satisfying $\phi$ for $\mathcal M$ under $M$.  For more details about probability measures on MDPs under a policy and measurability of LTL formulas, we refer readers to a text in probabilistic model checking, such as \cite{baier2008principles}.  

\section{Problem Formulation}
\label{sec:problem_formulation}

Consider a weighted MDP $\mathcal M=(S, U, P, s_{0}, \Pi, \mathcal L,
g)$ and an LTL formula $\phi$ over $\Pi$.  As proposed
in~\cite{SLS-JT-CB-DR:10b}, we assume that formula $\phi$ is of the
form: \be \phi=\gl \ev \pi \andltl \psi, \ee where the atomic
proposition $\pi\in \Pi$ is called the \emph{optimizing proposition}
and $\psi$ is an arbitrary LTL formula. In other words, $\phi$
requires that $\psi$ be satisfied and $\pi$ be satisfied infinitely
often.
We assume that there exists at least one policy $M$ of $\mathcal M$ such that 
$\mathcal M$ under $M$ satisfies $\phi$ almost surely, \ie $\textrm{Pr}^{M}_{\mathcal M}(\phi)=1$ (in this case we simply say $M$ satisfies $\phi$ almost surely).

We let $\mathbb M$ be the set of all policies and $\mathbb M_{\phi}$
be the set of all policies satisfying $\phi$ almost surely. Note that
if there exists a control policy satisfying $\phi$ almost surely, then
there typically exist many (possibly infinite number of) such policies.



We would like to obtain the optimal policy such that $\phi$ is almost
surely satisfied, and the expected cost in between visiting a state
satisfying $\pi$ is minimized.  To formalize this, we first denote
$S_{\pi}=\{i\in S, \pi\in \mathcal L(i)\}$ (\ie the states where
atomic proposition $\pi$ is true).  We say that each visit to set
$S_{\pi}$ {\it completes a cycle}.  Thus, starting at the initial
state, the finite path reaching $S_{\pi}$ for the first time is the
first cycle; the finite path that starts after the completion of the
first cycle and ends with revisiting $S_{\pi}$ for the second time is
the second cycle, and so on. Given a path $r^{M}_{\mathcal
  M}=s_{0}s_{1}\ldots$, we use $C(r^{M}_{\mathcal M},N)$ to denote the
cycle index up to stage $N$, which is defined as the total number of
cycles completed in $N$ stages plus 1 (\ie the cycle index starts with
$1$ at the initial state).


The main problem that we consider in this paper is to find a policy that minimizes the average cost per cycle (ACPC) starting from the initial state $s_{0}$. Formally, we have:

\begin{problem}
\label{prob:mainprob}
Find a policy $M=\{\mu_{0},\mu_{1},\ldots\}$, $M\in \mathbb M_{\phi}$
that minimizes
\be
\label{eq:averagecostpercycle}
J(s_{0})=\limsup_{N\rightarrow\infty} E\left\{\frac{\sum_{k=0}^{N}g(s_k,\mu_{k}(s_k))}{C(r^{M}_{\mathcal M},N)}\right\},
\ee
where $E\{\cdot\}$ denotes the expected value. 
\end{problem}


Prob. \ref{prob:mainprob} is related to the standard average cost per stage (ACPS) problem, which consist of minimizing 
\be
\label{eq:averagecost}
J^{s}(s_{0})=\limsup_{N\rightarrow\infty} E\left\{\frac{\sum_{k=0}^{N}g(s_k,\mu_{k}(s_k))}{N}\right\},
\ee
over $\mathbb M$, with the noted difference that the right-hand-side (RHS) of \eqref{eq:averagecost} is divided by the index of stages instead of cycles.  The ACPS problem has been widely studied in the dynamic programming community, without the constraint of satisfying temporal logic formulas.  

The ACPC cost function we consider in this paper is relevant for 
probabilistic abstractions and practical applications, where the cost of controls can represent the time, energy, or fuel required to apply controls at each state.  In particular, it is a suitable performance measure for persistent tasks, which can be specified by LTL formulas.  For example, in a data gathering mission \cite{SLS-JT-CB-DR:10b}, an agent is required to repeatedly gather and upload data.   We can assign $\pi$ to the data upload locations and a solution to Prob. \ref{prob:mainprob} minimizes the expected cost in between  data upload.  In such cases, the ACPS cost function does not translate to a meaningful performance criterion. 
In fact, a policy minimizing \eqref{eq:averagecost} may even produce an infinite cost in \eqref{eq:averagecostpercycle}.
Nevertheless, we will make the connection between the ACPS and the ACPC problems in Sec. \ref{sec:solutiontoaveragecycle}. 

\begin{remark}[Optimization Criterion]
The optimization criterion in Prob. \ref{prob:mainprob} is only meaningful for specifications where $\pi$ is satisfied infinitely often.  Otherwise, the limit from Eq. (\ref{eq:averagecostpercycle}) is infinite (since $g$ is a positive-valued function) and Prob. \ref{prob:mainprob} has no solution.  This is the reason for choosing $\phi$ in the form $\gl \ev \pi \andltl \psi$ and for only searching among policies that almost surely satisfy $\phi$.  
\end{remark}


\section{Solving the average cost per cycle problem}
\label{sec:solutiontoaveragecycle}
\subsection{Optimality conditions for ACPS problems}
\label{sec:sub:averagecostprelim}
In this section, we recall some known results on the ACPS problem, namely finding a policy over $\mathbb M$ that minimizes $J^{s}$ in \eqref{eq:averagecost}. The reader interested in more details is referred to   
\cite{bertsekas2007dynamic,puterman1994markov} and references therein.

\begin{definition}[Weak Accessibility Condition]
\label{def:WA}
An MDP $\mathcal M$ is said to satisfy the Weak Accessibility (WA) condition if there exist $S_{r}\subseteq S$, such that (i) there exists a stationary policy where $j$ is reachable from $i$ for any $i,j\in S_{r}$, and (ii) states in $S\setminus S_{r}$ are transient under all stationary policies.
\end{definition}

MDP $\mathcal M$ is called \emph{single-chain} (or \emph{weakly-communicating}) if it satisfies the WA condition. If $\mathcal M$ satisfies the WA condition with $S_{r}=S$, then $\mathcal M$ is called \emph{communicating}.  For a stationary policy, it induces a Markov chain with a set of recurrent classes.  A state that does not belong to any recurrent class is called \emph{transient}.  A stationary policy $\mu$ is called \emph{unichain} if the Markov chain induced by $\mu$ contains one recurrent class (and a possible set of transient states). If every stationary policy is unichain, $\mathcal M$ is called unichain.

Recall that the set of states of $\mathcal M$ is denoted by $\{1,\ldots,n\}$.  For each stationary policy $\mu$, we 
use $P_{\mu}$ to denote the transition probability matrix: $P_{\mu}(i,j)=P(i,\mu(i),j)$.  
Define vector $g_{\mu}$ where $g_{\mu}(i)=g(i,\mu(i))$.  
%
For each stationary policy $\mu$, we can obtain a so-called gain-bias pair $(J^{s}_{\mu},h^{s}_{\mu})$, where 
\be
\label{eq:gainbais}
J^{s}_{\mu}=P^{\ast}_{\mu}g_{\mu}, \hspace{.5cm} h^{s}_{\mu}=H^{s}_{\mu}g_{\mu}
\ee
with
\be
\label{eq:pmustar}
P_{\mu}^{\ast}=\lim_{N\rightarrow\infty}\frac{1}{N}\sum_{k=0}^{N-1}P_{\mu}^{k}, \hspace{.5cm} H_{\mu}=(I-P_{\mu}+P_{\mu}^{\ast})^{-1}-P_{\mu}^{\ast}.
\ee
The vector $J^{s}_{\mu}=[J^{s}_{\mu}(1),\ldots,J^{s}_{\mu}(n)]^{\tt T}$ is such that $J^{s}_{\mu}(i)$ is the ACPS starting at initial state $i$ under policy $\mu$.  
Note that the limit in \eqref{eq:pmustar} exists for any stochastic matrix $P_{\mu}$, and $P_{\mu}^{\ast}$ is stochastic. Therefore, the $\limsup$ in \eqref{eq:averagecost} can be replaced by the limit for a stationary policy.  Moreover, $(J^{s}_{\mu},h^{s}_{\mu})$ satisfies
\be
\label{eq:helpereq}
J^{s}_{\mu}=P_{\mu}J^{s}_{\mu}, \hspace{.5cm} J^{s}_{\mu}+h^{s}_{\mu}=g_{\mu}+P_{\mu}h^{s}_{\mu}.
\ee
By noting that
\be
\label{eq:vmu}
h^{s}_{\mu}+v^{s}_{\mu}=P_{\mu}v^{s}_{\mu}, %
\ee %
for some vector $v^{s}_{\mu}$, we see that
$(J^{s}_{\mu},h^{s}_{\mu},v^{s}_{\mu})$ is the solution of $3n$ linear
equations with $3n$ unknowns.

It has been shown that there exists a stationary optimal policy $\mu^{\star}$ minimizing \eqref{eq:averagecost} over all policies, where its gain-bias pair $(J^{s},h^{s})$ satisfies the Bellman's equations for average cost per stage problems:
\be
\label{eq:bellmanacps1}
J^{s}(i)=\min_{u\in U(i)}\sum_{j=1}^{n}P(i,u,j)J^{s}(j)
\ee
and 
\be
\label{eq:bellmanacps2}
J^{s}(i)+h^{s}(i)=\min_{u\in \bar U(i)}\bigg[g(i,u)+\sum_{j=1}^{n}P(i,u,j)h^{s}(j)\bigg],
\ee
for all $i=1,\ldots, n$, where $\bar{U}_{i}$ is the set of controls attaining the minimum in \eqref{eq:bellmanacps1}.  Furthermore, if $\mathcal M$ is single-chain, the optimal average cost does not depend on the initial state, \ie $J^{s}_{\mu^{\star}}(i)=\lambda$ for all $i\in S$.  In this case, \eqref{eq:bellmanacps1} is trivially satisfied and $\bar U_{i}$ in \eqref{eq:bellmanacps2} can be replaced by $U(i)$.  Hence,  $\mu^{\star}$ with gain-bias pair $(\lambda\bo, h)$ is optimal over all polices if for all stationary policies $\mu$ we have:
\be
\label{eq:bellmaninmatrixmu}
\lambda\bo+h\leq g_{\mu}+P_{\mu}h,
\ee 
where $\bo\in \mathbb R^{n}$ is a vector of all $1$s and $\leq$ is component-wise.

\subsection{Optimality conditions for ACPC problems}
\label{sec:sub:optimalACPC}
Now we derive equations similar to \eqref{eq:bellmanacps1} and \eqref{eq:bellmanacps2} for ACPC problems, without considering the satisfaction constraint, \ie we do not limit the set of polices to $\mathbb M_{\phi}$ at the moment. We consider the following problem:%
\begin{problem}
\label{prob:averagecostpercycle}
Given a communicating MDP $\mathcal M$ and a set $S_{\pi}$, find a policy $\mu\in\mathbb M$ that minimizes \eqref{eq:averagecostpercycle}.
\end{problem}

Note that, for reasons that will become clear in Sec. \ref{sec:solveLTLprob}, we assume in Prob. \ref{prob:averagecostpercycle} that the MDP is communicating.  However, it is possible to generalize the results in this section to an MDP that is not communicating.

We limit our attention to stationary policies. We will show that,  similar to the majority of problems in dynamic programming, there exist optimal stationary policies, thus it is sufficient to consider only stationary policies.  For such policies, we use the following notion of \emph{proper policies}, which is the same as the one used in stochastic shortest path problems (see \cite{bertsekas2007dynamic}).%
\begin{definition}[Proper Polices]
We say a stationary policy $\mu$ is proper if, under $\mu$, all initial states 
have positive probability to reach the set $S_{\pi}$ in a finite number of stages. 
\end{definition}

We denote $J_{\mu}=[J_{\mu}(1),\ldots,J_{\mu}(n)]^{\tt T}$ where $J_{\mu}(i)$ is the ACPC in \eqref{eq:averagecostpercycle} starting from state $i$ under policy $\mu$.  
If policy $\mu$ is improper, then there exist some states $i\in S$ that can never reach $S_{\pi}$.  In this case, since $g(i,u)$ is positive for all $i,u$, we can immediately see that $J_{\mu}(i)=\infty$.  
We will first consider only proper policies.  

Without loss of generality, we assume that $S_{\pi}=\{1,\ldots,m\}$ (\ie states $m+1,\ldots, n$ are not in $S_{\pi}$).   Given a proper policy $\mu$, we obtain its transition matrix $P_{\mu}$ as described in Sec. \ref{sec:sub:averagecostprelim}.  Our goal is to express $J_{\mu}$ in terms of $P_{\mu}$, similar to \eqref{eq:gainbais} in the ACPS case.
To achieve this, we first compute the probability that $j\in S_{\pi}$ is the first state visited in $S_{\pi}$ after leaving from a state $i\in S$ by applying policy $\mu$.  We denote this probability by $\widetilde{P}(i,\mu,j)$.  We can obtain this probability for all $i\in S$ and $j\in S_{\pi}$ by the following proposition: 

\begin{proposition}
$\widetilde P(i,\mu,j)$ satisfies
\be
\label{eq:tildePij}
\widetilde P(i,\mu,j)=\sum_{k=m+1}^{n} P(i,\mu(i),k)\widetilde P(k,\mu(k),j) + P(i,\mu(i),j).
\ee
\end{proposition}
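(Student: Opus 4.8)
The plan is to establish \eqref{eq:tildePij} by a standard first-step (one-step) analysis, conditioning on the outcome of the single transition taken out of state $i$ under the control $\mu(i)$. Write $E_{ij}$ for the event that, starting from $i$ and making at least one transition, the first state the path hits inside $S_{\pi}$ is $j$; by definition $\widetilde{P}(i,\mu,j)=\mathrm{Pr}(E_{ij})$. The key idea is that the first transition sends $i$ to some successor $\ell$ with probability $P(i,\mu(i),\ell)$, and after this transition the Markov property lets us treat $\ell$ as a fresh initial state for the remaining first-passage problem.

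First I would partition the possible successors $\ell$ of $i$ into three disjoint groups and compute the conditional contribution of each to $E_{ij}$. (a) If $\ell=j$, then $j$ has already been reached as the first $S_{\pi}$-state, so $E_{ij}$ occurs with conditional probability $1$; this yields the term $P(i,\mu(i),j)$. (b) If $\ell\in S_{\pi}$ but $\ell\neq j$, then a state other than $j$ is the first $S_{\pi}$-state visited, so $E_{ij}$ is impossible and the conditional contribution is $0$; this is exactly why the sum in \eqref{eq:tildePij} ranges only over $k=m+1,\ldots,n$, and why no term for $\ell\in\{1,\ldots,m\}\setminus\{j\}$ appears. (c) If $\ell\notin S_{\pi}$, i.e. $\ell\in\{m+1,\ldots,n\}$, then no $S_{\pi}$-state has yet been visited, and by the Markov property the probability that the first subsequent $S_{\pi}$-state is $j$ equals $\widetilde{P}(\ell,\mu,j)$; summing $P(i,\mu(i),\ell)\,\widetilde{P}(\ell,\mu,j)$ over these $\ell$ produces the summation term of \eqref{eq:tildePij}.

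Combining the three cases by the law of total probability gives exactly \eqref{eq:tildePij}. The step requiring the most care is the precise reading of ``first state visited in $S_{\pi}$ after leaving $i$'': because we count only visits strictly after the initial transition, the recursion is correct even when $i$ itself lies in $S_{\pi}$, and case (b) must be argued explicitly to see that landing in $S_{\pi}$ at a state $\neq j$ contributes nothing. The other subtlety is the appeal to the Markov property in case (c) to restart the first-passage process from $\ell$; this is justified because $\mu$ is stationary, so the law of the future trajectory depends only on the current state $\ell$. I would note that properness of $\mu$ is not needed for the identity itself — it only guarantees that $\widetilde{P}(i,\mu,\cdot)$ is a genuine first-passage distribution (with the $S_{\pi}$-hitting probabilities summing to one) — so I would not invoke it when deriving \eqref{eq:tildePij}.
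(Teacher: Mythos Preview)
Your proposal is correct and follows essentially the same approach as the paper: a first-step analysis that conditions on whether the immediate successor lies in $S_{\pi}$ or not, then invokes the Markov property for the latter case. Your write-up is in fact more careful than the paper's two-sentence sketch, making explicit the three-way partition of successors and the fact that landing in $S_{\pi}\setminus\{j\}$ contributes nothing.
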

\begin{proof}
From $i$, the next state can either be in $S_{\pi}$ or not.  The first term in the RHS of \eqref{eq:tildePij} is the probability of reaching $S_{\pi}$ and the first state is $j$, given that the next state is not in $S_{\pi}$.  Adding it with the probability of next step is in $S_{\pi}$ and the state is $j$ gives the desired result.
\end{proof}

We now define a $n\times n$ matrix $\widetilde{P}_{\mu}$ such that
\be
 \widetilde{P}_{\mu}(i,j)=\left\{
\begin{array}{ll}
\widetilde P(i,\mu,j) & \text{if } j\in S_{\pi}\\ 0 & \text{otherwise}
\end{array}\right.
\ee
We can immediately see that  $\widetilde{P}_{\mu}$ is a stochastic matrix, \ie all its rows sum up to $1$ or $\sum_{j=1}^{n}\widetilde P(i,\mu,j)=1$.  More precisely, $\sum_{j=1}^{m}\widetilde P(i,\mu,j)=1$ since $\widetilde P(i,\mu,j)=0$ for all $j=m+1,\ldots,n$.

Using \eqref{eq:tildePij}, we can express $\widetilde{P}_{\mu}$ in a matrix equation in terms of $P_{\mu}$.  To do this, we need to first define two $n\times n$ matrices from $P_{\mu}$ as follows:
\be
 \overleftarrow{P}_{\mu}(i,j)=\left\{
\begin{array}{ll}
P_{\mu}(i,j) & \text{if } j\in S_{\pi} \\ 0 & \text{otherwise}
\end{array}\right.
\ee
\be
 \overrightarrow{P}_{\mu}(i,j)=\left\{
\begin{array}{ll}
P_{\mu}(i,j) & \text{if } j\notin S_{\pi} \\ 0 & \text{otherwise}
\end{array}\right.
\ee
From Fig. \ref{fig:splitPmu}, we can see that matrix $P_{\mu}$ is ``split'' into $\overleftarrow{P}_{\mu}$ and $\overrightarrow{P}_{\mu}$, \ie $P_{\mu}=\overleftarrow{P}_{\mu}+\overrightarrow{P}_{\mu}$.

\begin{figure}[h]
\begin{center}
\includegraphics[scale=0.3]{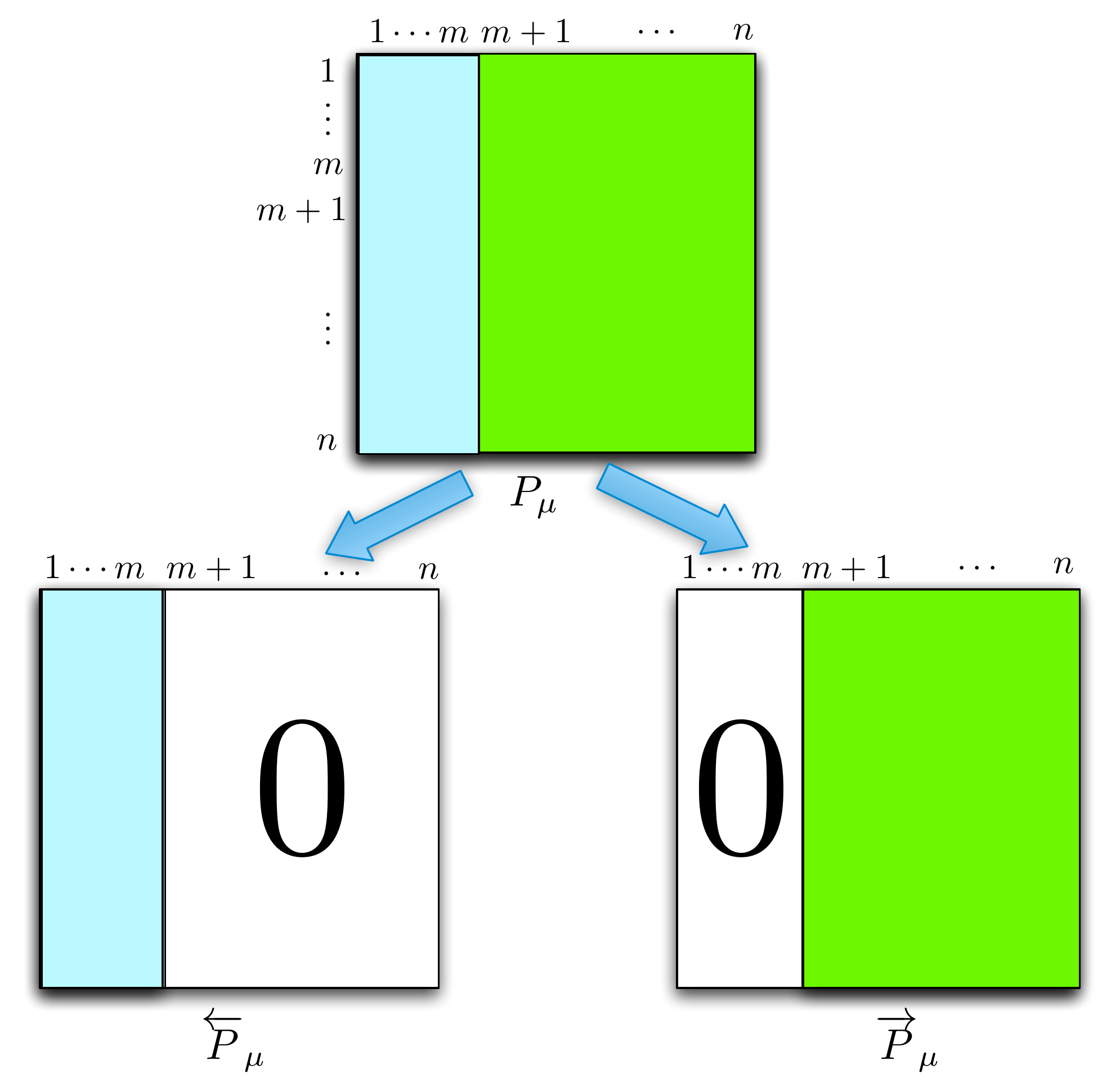}
\caption{The constructions of  $\overset{\leftarrow}{P}_{\mu}$ and $\overset{\rightarrow}{P}_{\mu}$  from $P_{\mu}$.}
\label{fig:splitPmu}
\end{center}
\end{figure}


\begin{proposition}
\label{prop:matrixI-PmuInvertible}
If a policy $\mu$ is proper, then matrix $I-\overrightarrow{P}_{\mu}$ is non-singular.
\end{proposition}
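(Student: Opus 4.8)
The plan is to show that the spectral radius of $\overrightarrow{P}_{\mu}$ is strictly less than $1$, which immediately implies that $1$ is not an eigenvalue and hence that $I-\overrightarrow{P}_{\mu}$ is non-singular. The starting point is a probabilistic reading of the powers of $\overrightarrow{P}_{\mu}$. Since $\overrightarrow{P}_{\mu}(i,j)$ equals $P_{\mu}(i,j)$ exactly when $j\notin S_{\pi}$ and is $0$ otherwise, a one-step induction shows that $(\overrightarrow{P}_{\mu})^{k}(i,j)$ is precisely the probability that, starting from $i$ and following $\mu$, the chain sits at state $j$ after $k$ steps while having avoided $S_{\pi}$ at every one of the first $k$ steps. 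Consequently the $i$-th row sum $\sum_{j}(\overrightarrow{P}_{\mu})^{k}(i,j)$ equals the probability of never visiting $S_{\pi}$ during the first $k$ stages when starting at state $i$.

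First I would upgrade properness into a \emph{uniform} reachability bound. Because $S$ is finite, whenever $S_{\pi}$ is reachable from $i$ with positive probability it is reachable along a loop-free path of at most $n$ transitions, so the probability $q_{i}$ of hitting $S_{\pi}$ within $n$ stages from $i$ satisfies $q_{i}>0$. Setting $\rho=\min_{i\in S}q_{i}$, which is positive as a minimum of finitely many positive numbers, we obtain that from every state $S_{\pi}$ is reached within $n$ stages with probability at least $\rho$.

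Next I would iterate this estimate via the Markov property. Conditioning on the state reached after each block of $n$ stages, the probability of avoiding $S_{\pi}$ for $kn$ consecutive stages from any start is at most $(1-\rho)^{k}$. By the row-sum interpretation above, this yields $\sum_{j}(\overrightarrow{P}_{\mu})^{kn}(i,j)\leq (1-\rho)^{k}$ for every $i$, and since all entries of $\overrightarrow{P}_{\mu}$ are non-negative, $(\overrightarrow{P}_{\mu})^{kn}\to 0$ entrywise, hence $(\overrightarrow{P}_{\mu})^{k}\to 0$ as $k\to\infty$. This forces the spectral radius of $\overrightarrow{P}_{\mu}$ to lie strictly below $1$.

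To conclude, I would argue directly that the null space is trivial: if $(I-\overrightarrow{P}_{\mu})x=0$, then $x=\overrightarrow{P}_{\mu}x=(\overrightarrow{P}_{\mu})^{k}x$ for all $k$, and letting $k\to\infty$ gives $x=0$. Therefore $I-\overrightarrow{P}_{\mu}$ is non-singular, and as a useful byproduct $(I-\overrightarrow{P}_{\mu})^{-1}=\sum_{k\geq 0}(\overrightarrow{P}_{\mu})^{k}$, which will later help express $\widetilde{P}_{\mu}$ in terms of $P_{\mu}$. I expect the main obstacle to be the second paragraph: properness only supplies positive, not uniformly bounded below, hitting probabilities, so the finiteness of $S$ must be invoked carefully to extract the single constant $\rho$; once that is in hand, the remainder is a standard substochastic-matrix argument.
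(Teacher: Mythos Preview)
Your proof is correct and follows essentially the same route as the paper: both arguments establish that $\overrightarrow{P}_{\mu}$ is transient in the sense that $\lim_{k\to\infty}(\overrightarrow{P}_{\mu})^{k}=0$, and then deduce the non-singularity of $I-\overrightarrow{P}_{\mu}$ from this. The paper simply asserts transience from properness and the fact that $\overrightarrow{P}_{\mu}(i,j)=0$ for $j\in S_{\pi}$, and then cites a linear algebra reference for the invertibility conclusion; your version is more self-contained, supplying the probabilistic interpretation of $(\overrightarrow{P}_{\mu})^{k}$, the uniform hitting bound $\rho>0$ extracted from the finiteness of $S$, and the elementary null-space argument, all of which the paper leaves to the cited source.
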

\begin{proof}
Since $\mu$ is proper, for every initial state $i\in S$, the set $S_{\pi}$ is eventually reached.  Because of this, and since $\overrightarrow P_{\mu}(i,j)=0$ if $j\in S_{\pi}$, matrix $\overrightarrow P_{\mu}$ is transient, \ie $\lim_{k\rightarrow\infty}\overrightarrow P_{\mu}^{k}=0$.  From linear algebra (see, \eg Ch. $9.4$ of \cite{hogben2007handbook}), since $\overrightarrow{P}_{\mu}$ is transient and sub-stochastic, $I-\overrightarrow{P}_{\mu}$ is non-singular.
\end{proof}

We can then write \eqref{eq:tildePij} as the following matrix equation:
\be
\label{eq:pmumatrix}
\widetilde{P}_{\mu}=\overrightarrow{P}_{\mu}\widetilde{P}_{\mu}+\overleftarrow{P}_{\mu}.
\ee
Since $I-\overrightarrow{P}_{\mu}$ is invertible, we have
\be
\label{eq:pmumatrixform}
\widetilde{P}_{\mu}=(I-\overrightarrow{P}_{\mu})^{-1}\overleftarrow{P}_{\mu}.
\ee
Note that \eqref{eq:pmumatrix} and \eqref{eq:pmumatrixform} do not depend on the ordering of the states of $\mathcal M$, \ie $S_{\pi}$ does not need to be equal to $\{1,\ldots,m\}$.

Next, we give an expression for the expected cost of reaching $S_{\pi}$ from $i\in S$ under $\mu$ (if $i\in S_{\pi}$, this is the expected cost of reaching $S_{\pi}$ again), and denote it as $\tilde g(i,\mu)$.  
\begin{proposition}
$\tilde g(i,\mu)$ satisfies
\be
\label{eq:gmu}
\tilde g(i,\mu)=\sum_{k=m+1}^{n} P(i,\mu(i),k)\tilde g(k,\mu)+g(i,\mu(i)).
\ee
\end{proposition}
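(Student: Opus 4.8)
The plan is to mirror the one-step conditioning argument used in the proof of the preceding proposition on $\widetilde P(i,\mu,j)$. Recall that $\tilde g(i,\mu)$ is defined as the expected cost accumulated starting from $i$ until $S_{\pi}$ is reached (and, when $i\in S_{\pi}$, until $S_{\pi}$ is reached \emph{again}). First I would condition on the first transition out of $i$. Under the stationary policy $\mu$ the control $\mu(i)$ is applied at $i$, so the immediate cost $g(i,\mu(i))$ is \emph{always} incurred, regardless of where the chain moves next; this is the additive term $g(i,\mu(i))$ on the RHS of \eqref{eq:gmu}.

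Next I would split the possible successor states according to whether they lie in $S_{\pi}=\{1,\dots,m\}$ or not, and assemble the total via the law of total expectation. If the successor $j$ satisfies $j\in\{1,\dots,m\}$, then $S_{\pi}$ has just been reached and the cost accounting for this cycle terminates, contributing zero residual expected cost. If instead the successor $k$ satisfies $k\in\{m+1,\dots,n\}$, then $S_{\pi}$ has not yet been hit, and by the Markov property together with the time-homogeneity of the stationary policy $\mu$, the expected cost still to be accumulated from $k$ onward is precisely $\tilde g(k,\mu)$ again. Weighting these two cases by the transition probabilities $P(i,\mu(i),\cdot)$ yields
\be
\tilde g(i,\mu)=g(i,\mu(i))+\sum_{j=1}^{m}P(i,\mu(i),j)\cdot 0+\sum_{k=m+1}^{n}P(i,\mu(i),k)\,\tilde g(k,\mu),
\ee
and dropping the vanishing middle sum gives exactly \eqref{eq:gmu}.

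The step I expect to require the most care is the invocation of the Markov property to replace the residual cost-to-reach from the intermediate state $k$ by $\tilde g(k,\mu)$; this is where stationarity of $\mu$ is essential, and it is what makes the recursion well posed. A secondary subtlety is that the $i\in S_{\pi}$ case is handled uniformly: since the first-step cost $g(i,\mu(i))$ is charged unconditionally and termination is only triggered upon the \emph{next} entry into $S_{\pi}$, the formula correctly encodes ``reaching $S_{\pi}$ again'' for such $i$. Finally, because $\mu$ is assumed proper, $\overrightarrow{P}_{\mu}$ is transient (as established in Proposition~\ref{prop:matrixI-PmuInvertible}), so each $\tilde g(i,\mu)$ is finite and all the manipulations above are valid; the matrix form analogous to \eqref{eq:pmumatrixform} then follows immediately, though that is beyond what the present statement requires.
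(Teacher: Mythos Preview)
Your proposal is correct and follows essentially the same one-step conditioning argument as the paper's proof, which simply notes that the first term on the RHS is the expected residual cost when the next state is not in $S_{\pi}$ (zero otherwise) and the second term is the one-step cost incurred regardless. Your version is more detailed---making the Markov/stationarity justification, the $i\in S_{\pi}$ subtlety, and the finiteness via properness explicit---but the underlying idea is identical.
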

\begin{proof}
The first term of RHS of \eqref{eq:gmu} is the expected cost from the next state if the next state is not in $S_{\pi}$ (if the next state is in $S_{\pi}$ then no extra cost is incurred), and the second term is the one-step cost, which is incurred regardless of the next state.
\end{proof}

We define $\tilde g_{\mu}$ such that $\tilde g_{\mu}(i)=\tilde g(i,\mu)$, and note that \eqref{eq:gmu} can be written as:
\bea
\label{eq:gmumatrixform}
\tilde g_{\mu}&=&\overrightarrow P_{\mu}\tilde g_{\mu}+g_{\mu}\n
\tilde g_{\mu}&=&(I-\overrightarrow{P}_{\mu})^{-1}g_{\mu},
\eea
where $g_{\mu}$ is defined in Sec. \ref{sec:sub:averagecostprelim}.

We can now express the ACPC $J_{\mu}$ in terms of $\widetilde{P}_{\mu}$ and $\tilde g_{\mu}$.  Observe that, starting from $i$, the expected cost of the first cycle is $\tilde g_{\mu}(i)$; the expected cost of the second cycle is $\sum_{j=1}^{m}\widetilde P_{\mu}(i,\mu,j)\tilde g_{\mu}(j)$; the expected cost of the third cycle is $\sum_{j=1}^{m}\sum_{k=1}^{m}\widetilde P_{\mu}(i,\mu,j)\widetilde P_{\mu}(j,\mu,k)\tilde g_{\mu}(k)$; and so on.  Therefore:
\be
\label{eq:averagecostallcycles}
J_{\mu}=\limsup_{C\rightarrow\infty} \frac{1}{C}\sum_{k=0}^{C-1}\widetilde{P}_{\mu}^{k}\tilde g_{\mu},
\ee
where $C$ represents the cycle count.  Since $\widetilde{P}_{\mu}$ is a stochastic matrix, the $\limsup$ in \eqref{eq:averagecostallcycles} can be replaced by the limit, and we have
\be
\label{eq:costinmatrix}
J_{\mu}=\lim_{C\rightarrow\infty} \frac{1}{C}\sum_{k=0}^{C-1}\widetilde{P}_{\mu}^{k}\tilde g_{\mu}=\widetilde{P}_{\mu}^{\ast}\tilde g_{\mu},
\ee
where $P^{\ast}$ for a stochastic matrix $P$ is defined in \eqref{eq:pmustar}.  

We can now make a connection between Prob. \ref{prob:averagecostpercycle} and the ACPS problem. Each proper policy $\mu$ of $\mathcal M$ can be mapped to a policy $\tilde \mu$ with transition matrix $P_{\tilde\mu}:=\widetilde{P}_{\mu}$ and vector of costs $g_{\tilde\mu}:=\tilde g_{\mu}$, and we have
\be
\label{eq:costequiv}
J_{\mu}=J^{s}_{\tilde\mu}.
\ee
Moreover, we define $h_{\mu}:=h^{s}_{\tilde\mu}$.  Together with $J_{\mu}$, pair $(J_{\mu},h_{\mu})$ can be seen as the gain-bias pair for the ACPC problem.   
We denote the set of all polices that can be mapped to a proper policy as $\mathbb M_{\tilde\mu}$.  We see that a proper policy minimizing the ACPC maps to a policy over $\mathbb M_{\tilde\mu}$ minimizing the ACPS.

The by-product of the above analysis is that, if $\mu$ is proper, then $J_{\mu}(i)$ is finite for all $i$, since $\widetilde P_{\mu}^{\ast}$ is a stochastic matrix and $g_{\mu}(i)$ is finite.  We now show that, among stationary policies, it is sufficient to consider only proper policies. 
\begin{proposition}
\label{prop:propersufficient}
Assume $\mu$ to be an improper policy.  If $\mathcal M$ is communicating, then there exists a proper policy $\mu'$ such that $J_{\mu'}(i)\leq J_{\mu}(i)$ for all $i=1,\ldots,n$, with strict inequality for at least one $i$.
\end{proposition}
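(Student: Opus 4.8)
The plan is to partition the state space according to whether the ACPC under $\mu$ is finite, to leave $\mu$ untouched on the finite part, and to reroute the infinite part toward $S_{\pi}$ using the communicating assumption. I would define $S_{\infty}=\{i\in S: J_{\mu}(i)=\infty\}$ and $R=S\setminus S_{\infty}$. Since $\mu$ is improper, some state reaches $S_{\pi}$ with probability zero, and as already noted before the proposition such a state has $J_{\mu}(i)=\infty$; hence $S_{\infty}\neq\emptyset$.

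The crux is the following characterization and its structural consequence: I claim that $J_{\mu}(i)<\infty$ if and only if, starting from $i$ under $\mu$, the set $S_{\pi}$ is visited infinitely often with probability one. For the forward direction I would argue the contrapositive: if with positive probability $S_{\pi}$ is visited only finitely often, then on that event the cycle count $C(r^{M}_{\mathcal M},N)$ is eventually constant while $\sum_{k=0}^{N}g(s_k,\mu(s_k))$ diverges (because $g$ is bounded below by a positive constant on the finite set $S\times U$), which forces the expectation in \eqref{eq:averagecostpercycle} to infinity. The reverse direction uses the finite-chain structure: if $S_{\pi}$ is hit infinitely often almost surely, then every recurrent class reachable from $i$ meets $S_{\pi}$, and a renewal argument (equivalently, the matrix identity \eqref{eq:costinmatrix} applied to that class) yields a finite ACPC. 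From this characterization I obtain the key fact that $R$ is \emph{closed} under $\mu$: if some $i\in R$ had a successor $j$ with $P(i,\mu(i),j)>0$ and $J_{\mu}(j)=\infty$, then the positive probability of getting ``stuck'' away from $S_{\pi}$ from $j$ would transfer to $i$, contradicting $J_{\mu}(i)<\infty$.

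I would then construct $\mu'$ by setting $\mu'(i)=\mu(i)$ for $i\in R$, and, for each $i\in S_{\infty}$, choosing $\mu'(i)\in U(i)$ to be a control that strictly decreases the graph distance to $S_{\pi}$ (this distance is finite for every state precisely because $\mathcal M$ is communicating). Properness of $\mu'$ then follows: states in $R$ reach $S_{\pi}$ with probability one by the characterization above, while from any $i\in S_{\infty}$ the greedy choice produces, with positive probability, a finite path that either reaches $S_{\pi}$ directly along $S_{\infty}$ or first enters $R$, from which $S_{\pi}$ is then reached almost surely. Hence every state reaches $S_{\pi}$ with positive probability, so $\mu'\in\mathbb M$ is proper.

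Finally I would compare costs. For $i\in R$, since $R$ is closed under $\mu$ and $\mu'=\mu$ on $R$, the trajectory distributions from $i$ coincide under $\mu$ and $\mu'$, so $J_{\mu'}(i)=J_{\mu}(i)$. For $i\in S_{\infty}$ we have $J_{\mu}(i)=\infty$, whereas $\mu'$ is proper, so by the by-product established just before the proposition $J_{\mu'}(i)<\infty$; thus $J_{\mu'}(i)<J_{\mu}(i)$. Therefore $J_{\mu'}(i)\leq J_{\mu}(i)$ for all $i=1,\ldots,n$, with strict inequality on the nonempty set $S_{\infty}$, which is exactly the claim. I expect the main obstacle to be making the finiteness characterization of $J_{\mu}$ and the closedness of $R$ fully rigorous, since these rest on the probabilistic ``getting stuck'' argument and the positive lower bound on $g$, rather than on the matrix identities that are only available for proper policies.
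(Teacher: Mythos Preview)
Your argument is correct and structurally parallel to the paper's: both partition the state space into a ``good'' part on which $\mu$ is left untouched and a ``bad'' part on which the communicating assumption is used to reroute toward $S_{\pi}$, then compare costs. The difference is in the choice of partition. The paper takes the bad set to be $S_{\nrightarrow\pi}=\{i:\text{$i$ reaches $S_{\pi}$ with probability }0\}$, which is absorbing under $\mu$ (no transition from $S_{\nrightarrow\pi}$ to its complement), whereas you take the bad set to be $S_{\infty}=\{i:J_{\mu}(i)=\infty\}$, whose \emph{complement} $R$ is absorbing under $\mu$. In general $S_{\nrightarrow\pi}\subseteq S_{\infty}$, and the containment can be strict (a state may reach $S_{\pi}$ with positive but not full probability).

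What each choice buys: the paper's partition is immediate to define and requires no auxiliary lemma, but its cost comparison on $S_{\rightarrow\pi}$ is terse---trajectories from $S_{\rightarrow\pi}$ \emph{can} enter $S_{\nrightarrow\pi}$, so ``controls at these states are not changed'' does not by itself give equal trajectory distributions; the inequality still holds because such states have $J_{\mu}(i)=\infty$ anyway. Your partition requires the extra characterization ``$J_{\mu}(i)<\infty$ iff $S_{\pi}$ is visited infinitely often a.s.,'' but once that is in hand, $R$ is genuinely closed and you get an honest \emph{equality} $J_{\mu'}(i)=J_{\mu}(i)$ on $R$, which is cleaner. Your self-identified obstacle is the right one: the reverse implication of the characterization (infinitely many visits a.s.\ $\Rightarrow$ finite ACPC) needs a short renewal/ergodic argument on the recurrent classes of the induced chain, together with a dominated-convergence or uniform-integrability step to pass from almost-sure convergence of the ratio to convergence of its expectation; in a finite-state chain this goes through without difficulty.
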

\begin{proof}
We partition $S$ into two sets of states: $S_{\nrightarrow \pi}$ is the set of states in $S$ that cannot reach $S_{\pi}$ and $S_{\rightarrow \pi}$ as the set of states that can reach $S_{\pi}$ with positive probability.  Since $\mu$ is improper and $g(i,u)$ is postive-valued, $S_{\nrightarrow \pi}$ is not empty and $J_{\mu}(i)=\infty$ for all $i\in S_{\nrightarrow \pi}$.   Moreover, states in $S_{\nrightarrow \pi}$ cannot visit $S_{\rightarrow \pi}$ by definition.  Since $\mathcal M$ is communicating, there exists some actions at some states in $S_{\nrightarrow \pi}$ such that, if applied, all states in $S_{\nrightarrow \pi}$ can now visit $S_{\pi}$ with positive probability and this policy is now proper (all states can now reach $S_{\pi}$).   We denote this new policy as $\mu'$.  Note that this does not increase $J_{\mu}(i)$ if  $i\in S_{\rightarrow\pi}$ since controls at these states are not changed.  Moreover, since $\mu'$ is proper, $J_{\mu'}(i)< \infty$ for all $i\in S_{\nrightarrow \pi}$.  Therefore $J_{\mu'}(i)< J_{\mu}(i)$ for all $i\in S_{\nrightarrow \pi}$.
\end{proof}

Using the connection to the ACPS problem, we have:
\begin{proposition}
\label{prop:optimalcostsame}
The optimal ACPC policy over stationary policies is independent of the initial state.
\end{proposition}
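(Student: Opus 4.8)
The plan is to combine the reduction already in place, namely $J_{\mu}=J^{s}_{\tilde\mu}$ from \eqref{eq:costequiv}, with the communicating assumption on $\mathcal M$, so that state-independence of the optimal average cost per cycle becomes the familiar fact that a communicating average-cost problem has a state-independent optimal value. First I would record that, since $S$ and $U$ are finite, there are only finitely many stationary policies, and by Proposition~\ref{prop:propersufficient} an optimal stationary policy may be taken proper (a proper policy exists because $\mathcal M$ is communicating). Hence the optimal cost $J^{\star}(i):=\min_{\mu}J_{\mu}(i)$ is a minimum over the finite set of proper policies and is finite for every $i$ by \eqref{eq:costinmatrix}.

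Next I would analyze the recurrent structure of the transformed chain. For a proper $\mu$, the matrix $\widetilde{P}_{\mu}$ sends every state into $S_{\pi}$ in one step, so all recurrent states lie in $S_{\pi}$ and $J^{s}_{\tilde\mu}(i)=(\widetilde{P}^{\ast}_{\mu}\tilde g_{\mu})(i)$ is the average, weighted by hitting probabilities, of the per-cycle costs of the recurrent classes of $\widetilde{P}_{\mu}$ reachable from $i$. Let $\lambda=\min_{i}J^{\star}(i)$, attained at some state $i^{\star}$ by a proper policy $\hat\mu$; decomposing $J_{\hat\mu}(i^{\star})=J^{s}_{\tilde{\hat\mu}}(i^{\star})$ over the reachable recurrent classes shows that at least one such class $R\subseteq S_{\pi}$ has per-cycle cost $\gamma_{R}\leq\lambda$.

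The core of the argument then uses communication to show $J^{\star}(i)\leq\lambda$ for every $i$. I would construct a single stationary policy $\mu'$ that coincides with $\hat\mu$ on $R$ together with all states visited while the transformed chain cycles within $R$ (so that $R$ survives as a recurrent class with unchanged per-cycle cost $\gamma_{R}$), and at every remaining state selects, using the communicating property of $\mathcal M$, controls that drive the chain toward $R$ with positive probability. Such a $\mu'$ is proper, and from any initial state $i$ the transformed chain enters $R$ almost surely while all other states remain transient; since the average cost per cycle is a long-run (Ces\`aro) average, the finite expected cost of the transient ``steering'' phase is divided by a cycle count tending to infinity and washes out, giving $J_{\mu'}(i)=\gamma_{R}$. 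Hence $J^{\star}(i)\leq\gamma_{R}\leq\lambda\leq J^{\star}(i)$, forcing $J^{\star}(i)=\lambda$ for every $i$, and the single policy built around the cheapest reachable class is then optimal from every initial state.

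The step I expect to be the main obstacle is disentangling the controls that realize $R$'s cycles from the controls used for steering, and then making the ``steering washes out'' claim rigorous. I must verify that $\mu'$ can retain $\hat\mu$'s controls on every state participating in a cycle of $R$ while still driving all other states into $R$, that this does not spawn a competing recurrent class in $S_{\pi}$ whose per-cycle cost could raise the average, and that the transient cost is negligible in the per-cycle limit of \eqref{eq:costinmatrix} rather than merely the per-stage limit. Once $R$ is established as the unique recurrent class reachable from each $i$ under $\mu'$, reading off $J_{\mu'}=\widetilde{P}^{\ast}_{\mu'}\tilde g_{\mu'}$ yields the constant value $\gamma_{R}$ and completes the proof.
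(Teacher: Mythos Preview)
Your proposal is correct and essentially reproves, in this specific setting, the single-chain state-independence theorem that the paper instead invokes as a black box. The paper's proof is much shorter: it observes that the transformed MDP (with stationary policies $\tilde\mu$ and transition matrices $P_{\tilde\mu}=\widetilde P_{\mu}$) satisfies the Weak Accessibility condition of Def.~\ref{def:WA} with $S_{r}=S_{\pi}$, because (i) the communicating assumption on $\mathcal M$ makes every $j\in S_{\pi}$ reachable from every $i\in S_{\pi}$ under some $\tilde\mu$, and (ii) every state outside $S_{\pi}$ is transient under every $\tilde\mu$ since $\widetilde P_{\mu}(i,j)=0$ for $j\notin S_{\pi}$. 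The known ACPS result for single-chain MDPs then gives constant optimal cost over $\mathbb M_{\tilde\mu}$, and Prop.~\ref{prop:propersufficient} extends this to all stationary policies. Your route is more constructive: you isolate a cheapest recurrent class $R$ of the transformed chain and build a single unichain policy $\mu'$ that keeps $\hat\mu$ on the $\hat\mu$-closure $T$ of $R$ and steers $S\setminus T$ into $T$. The obstacles you flag are real but surmountable (the steering policy on $S\setminus T$ exists by a standard shortest-path argument in a communicating MDP, $T$ is absorbing so no competing recurrent class appears, and the transient phase vanishes in the Ces\`aro limit \eqref{eq:costinmatrix} because $\widetilde P_{\mu'}^{\ast}$ depends only on the unique recurrent class $R$). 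The payoff of your approach is an explicit optimal policy rather than a pure existence statement; the payoff of the paper's approach is brevity and a clean reduction to known theory.
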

\begin{proof}
We first consider the optimal ACPC over proper policies.  As mentioned before, if all stationary policies of an MDP satisfies the WA condition (see Def. \ref{def:WA}), then the ACPS is equal for all initial states.  Thus, we need to show that the WA condition is satisfied for all $\tilde\mu$.  We will use $S_{\pi}$ as set $S_{r}$.  Since $\mathcal M$ is communicating, then for each pair $i,j\in S_{\pi}$, $P(i,\mu,j)$ is positive for some $\mu$, therefore from \eqref{eq:tildePij}, $\widetilde P_{\mu}(i,j)$ is positive for some $\mu$ (\ie $P_{\tilde\mu}(i,j)$ is positive for some $\tilde\mu$), and the first condition of Def. \ref{def:WA} is satisfied.  Since $\mu$ is proper, the set $S_{\pi}$ can be reached from all $i\in S$.  In addition, $P_{\tilde\mu}(i,j)=0$ for all $j\notin S_{\pi}$.  Thus, all states $i\notin S_{\pi}$ are transient under all policies $\tilde \mu\in \mathbb M_{\tilde\mu}$, and the second condition is satisfied.  Therefore WA condition is satisfied and the optimal ACPS over $\mathbb M_{\tilde\mu}$ is equal for all initial state.  Hence, the optimal ACPC is the same for all initial states over proper policies.  Using Prop. \ref{prop:propersufficient}, we can conclude the same statement over stationary policies.
\end{proof}

The above result is not surprising, as it mirrors the result for a single-chain MDP in the ACPS problem.  Essentially, transient behavior does not matter in the long run so the optimal cost is the same for any initial state.  

Using Bellman's equation \eqref{eq:bellmanacps1} and \eqref{eq:bellmanacps2}, and in particular the case when the optimal cost is the same for all initial states \eqref{eq:bellmaninmatrixmu}, policy $\tilde\mu^{\star}$ with the ACPS gain-bias pair $(\lambda\bo,h)$ satisfying for all $\tilde\mu\in\mathbb M_{\tilde\mu}$:
\be
\label{eq:bellmaninmatrix}
\lambda\bo+h\leq g_{\tilde \mu}+P_{\tilde\mu}h
\ee
is optimal.  Equivalently, $\mu^{\star}$ that maps to $\tilde\mu^{\star}$ is optimal over all proper policies. The following proposition shows that this policy is optimal over all policies in $\mathbb M$, stationary or not.
\begin{proposition}
\label{prop:optimalpolicyIsStationary}
The proper policy $\mu^{\star}$ that maps to $\tilde\mu^{\star}$ satisfying \eqref{eq:bellmaninmatrix} is optimal over $\mathbb M$.
\end{proposition}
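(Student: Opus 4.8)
The plan is to show that the cost $\lambda$ achieved by $\mu^{\star}$ is a lower bound on the ACPC of \emph{every} policy in $\mathbb M$, not merely the stationary ones. By Proposition \ref{prop:propersufficient} and Proposition \ref{prop:optimalcostsame}, $\mu^{\star}$ is optimal over all stationary policies and attains the initial-state-independent value $\lambda$; so it suffices to prove $J_{M}(s_{0})\ge \lambda$ for an arbitrary (non-stationary) policy $M=\{\mu_{0},\mu_{1},\ldots\}\in\mathbb M$. First I would dispose of the policies under which $S_{\pi}$ is visited only finitely often with positive probability: since $S$ and $U$ are finite and $g$ is strictly positive, $g\ge g_{\min}>0$, so on such sample paths $\sum_{k=0}^{N}g\to\infty$ while $C(r^{M}_{\mathcal M},N)$ stays bounded, forcing $J_{M}(s_{0})=\infty\ge\lambda$. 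Hence I may restrict to policies that reach $S_{\pi}$ infinitely often almost surely.

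The core step is to convert the ``macro'' Bellman inequality \eqref{eq:bellmaninmatrix} into a per-stage inequality on the original MDP. I define $\tilde h\in\mathbb R^{n}$ by $\tilde h(i)=h(i)$ for $i\in S_{\pi}$ and $\tilde h(i)=h(i)+\lambda$ for $i\notin S_{\pi}$. Fix a state $i$ and a control $u\in U(i)$, and consider the stationary policy $\mu$ that plays $u$ at $i$ and agrees with $\mu^{\star}$ everywhere else (which is again proper; the degenerate case where it is not can only occur at a non-$S_{\pi}$ self-absorbing choice, where the claimed inequality is immediate because $g\ge 0$). Expanding $(g_{\tilde\mu}+P_{\tilde\mu}h)(i)=\tilde g(i,\mu)+\sum_{j\in S_{\pi}}\widetilde P(i,\mu,j)h(j)$ with the one-step recursions \eqref{eq:tildePij} and \eqref{eq:gmu}, and using that downstream the policy is $\mu^{\star}$ (so the continuation value at every successor equals $\tilde h$), this quantity collapses to $g(i,u)+\sum_{j}P(i,u,j)\tilde h(j)$. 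Since \eqref{eq:bellmaninmatrix} bounds it below by $\lambda+h(i)=\tilde h(i)+\lambda\mathbf 1[i\in S_{\pi}]$, I obtain, for all $i$ and all $u\in U(i)$,
\[
\tilde h(i)+\lambda\,\mathbf 1[i\in S_{\pi}]\le g(i,u)+\sum_{j}P(i,u,j)\tilde h(j),
\]
with equality at $u=\mu^{\star}(i)$.

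Next I would telescope this inequality along $M$. Writing $u_{k}=\mu_{k}(s_{k})$ and $\mathcal F_{k}=\sigma(s_{0},\ldots,s_{k})$, it gives $g(s_{k},u_{k})\ge \tilde h(s_{k})-E[\tilde h(s_{k+1})\mid\mathcal F_{k}]+\lambda\mathbf 1[s_{k}\in S_{\pi}]$. Summing over $k=0,\ldots,N$, taking expectations, and using that $\tilde h$ is bounded by $B:=\max_{i}|\tilde h(i)|$ yields $E[\sum_{k=0}^{N} g(s_{k},u_{k})]\ge \lambda\,E[V_{N}]-2B$, where $V_{N}=\sum_{k=0}^{N}\mathbf 1[s_{k}\in S_{\pi}]$ counts the visits to $S_{\pi}$ and therefore equals the cycle index $C(r^{M}_{\mathcal M},N)$ up to a bounded additive constant.

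The hard part will be passing from this integral bound to the ACPC in \eqref{eq:averagecostpercycle}, because there the ratio $\sum g/C$ sits \emph{inside} the expectation, and $E[\sum g]\ge \lambda E[V_{N}]-2B$ does not give $E[\sum g/C]\ge\lambda$ (the map $(x,y)\mapsto x/y$ is not jointly convex, so Jensen is unavailable). My plan to overcome this is a pathwise/almost-sure argument: retain the martingale-difference terms $m_{k+1}=\tilde h(s_{k+1})-E[\tilde h(s_{k+1})\mid\mathcal F_{k}]$ in the telescoping, evaluate the resulting pathwise inequality along the cycle-completion stages $\sigma_{C}$ (so that $V_{\sigma_{C}}=C$), divide by $C$, and show $\tfrac1C\sum_{k\le\sigma_{C}}m_{k}\to 0$ almost surely via a martingale strong law — the per-cycle increments are sums of bounded martingale differences over cycles whose expected lengths are finite for any finite-cost policy (again because $g\ge g_{\min}>0$). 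This gives $\liminf_{C}\tfrac1C\sum_{k=0}^{\sigma_{C}}g(s_{k},u_{k})\ge\lambda$ almost surely; since the integrand is non-negative, Fatou's lemma then yields $\liminf_{C}E[\,\cdot\,]\ge\lambda$, and hence $\limsup_{N}E[\sum g/C]\ge\lambda=J_{\mu^{\star}}(s_{0})$, completing the proof. Controlling this martingale term relative to the \emph{random} number of cycles is the main technical obstacle.
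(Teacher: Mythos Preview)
Your route is genuinely different from the paper's. The paper argues in two layers: first, for policies that are stationary within each cycle, the ACPC equals the ACPS of the induced sequence $\{\tilde\mu_{1},\tilde\mu_{2},\ldots\}$ on the cycle-level chain, so the stationary optimum $\tilde\mu^{\star}$ of \eqref{eq:bellmaninmatrix} dominates any such sequence; second, for fully non-stationary policies it invokes the stochastic shortest path (SSP) problem to $S_{\pi}$, which has a stationary optimum since $g>0$, to argue that each cycle's cost is already minimized by a stationary rule, reducing to the first case. You instead try to extract a \emph{per-stage} Bellman inequality and telescope it along an arbitrary policy --- a more direct and self-contained line that, once the per-stage inequality is in hand, avoids the ``stationary within cycles'' intermediate step entirely.

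There is, however, a real gap in your derivation of that per-stage inequality. The policy $\mu$ you build (play $u$ at $i$, play $\mu^{\star}$ elsewhere) need not be proper, and not only in the ``self-absorbing'' case you dismiss: if under $\mu^{\star}$ some $k\notin S_{\pi}$ reaches $S_{\pi}$ only through $i$, then redirecting $i$ back toward $k$ creates a non-$S_{\pi}$ cycle and $\mu$ is improper. Then $\tilde g(i,\mu)=\infty$, \eqref{eq:bellmaninmatrix} is vacuous, and your one-step collapse also fails because the continuation from a successor $j\notin S_{\pi}$ under $\mu$ is not the $\mu^{\star}$-continuation (the path may revisit $i$ and play $u$ again). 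The inequality you want is precisely \eqref{eq:bellmanpercycle}, which the paper states separately (Proposition~\ref{prop:optimalityconditiongeneral}); there the implication runs the \emph{other} way, because multiplying by $(I-\overrightarrow P_{\mu})^{-1}\ge 0$ preserves order while multiplying by $I-\overrightarrow P_{\mu}$ does not. The clean fix in your direction is to recognize $W_{\mu}(i):=\tilde g_{\mu}(i)+(\widetilde P_{\mu}h)(i)$ as the SSP cost to $S_{\pi}$ with terminal reward $h$; \eqref{eq:bellmaninmatrix} together with the equality at $\mu^{\star}$ (from Proposition~\ref{prop:Jmuhmu}) says $\mu^{\star}$ is SSP-optimal with value $\lambda+h$, and the SSP Bellman equation then yields exactly your per-stage inequality for every $u\in U(i)$.

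With that repair, the telescoping/martingale plan is reasonable, though you correctly flag the obstacle of controlling $\tfrac{1}{C}\sum m_{k}$ at random cycle-completion times and then lifting an a.s.\ $\liminf$ to the $\limsup_{N}E[\cdot]$ in \eqref{eq:averagecostpercycle}. The paper's approach trades those stochastic-analysis details for appeals to standard ACPS and SSP results; yours would be more self-contained but requires closing both the per-stage-inequality gap and the martingale-LLN step.
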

\begin{proof}
Consider a $M=\{\mu_{1},\mu_{2,}\ldots\}$ and assume it to be optimal.  We first consider that $M$ is stationary for each cycle, and the policy is $\mu_{k}$ for the $k$-th cycle.  Among this type of polices, from Prop. \ref{prop:propersufficient}, we see that if $M$ is optimal, then $\mu_{k}$ is proper for all $k$.  In addition, the ACPC of policy M is the ACPS with policy $\{\tilde\mu_{1},\tilde\mu_{2},\ldots\}$.   Since the optimal policy of the ACPS is $\tilde\mu^{\star}$ (stationary).  Then we can conclude that if $M$ is stationary in between cycles, then optimal policy for each cycle is $\mu^{\star}$ and thus $M=\mu^{\star}$.   

Now we assume that $M$ is not stationary for each cycle.  Since $g(i,u)>0$ for all $i,u$, and there exists at least one proper policy, the stochastic shortest path problem for $S_{\pi}$ admits an optimal stationary policy as a solution \cite{bertsekas2007dynamic}.    Hence, for each cycle $k$, the cycle cost can be minimized if a stationary policy is used for the cycle.  Therefore, a policy which is stationary in between cycles is optimal over $\mathbb M$, which is in turn, optimal if $M=\mu^{\star}$.  The proof is complete. 
\end{proof}

Unfortunately, it is not clear how to find the optimal policy from \eqref{eq:bellmaninmatrix} except by searching through all policies in $\mathbb M_{\tilde\mu}$.  This exhaustive search is not feasible for reasonably large problems.  Instead, we would like equations in the form of \eqref{eq:bellmanacps1} and \eqref{eq:bellmanacps2}, so that the optimizations can be carried out independently at each state.


To circumvent this difficulty, we need to express the gain-bias pair $(J_{\mu},h_{\mu})$, which is equal to $(J^{s}_{\tilde\mu},h^{s}_{\tilde\mu})$, in terms of $\mu$.  From \eqref{eq:helpereq}, we have 
\ben
J_{\mu}=P_{\tilde\mu}J_{\mu}, \hspace{.5cm} J_{\mu}+h_{\mu}=g_{\tilde\mu}+P_{\tilde\mu}h_{\mu}.
\een
By manipulating the above equations, we can now show that $J_{\mu}$ and $h_{\mu}$ can be expressed in terms of $\mu$ (analogous to \eqref{eq:helpereq}) instead of $\tilde\mu$ via the following proposition:
\begin{proposition}
\label{prop:Jmuhmu}
We have
\be
\label{eq:helpereqmu}
J_{\mu}=P_{\mu}J_{\mu},  \hspace{.5cm} J_{\mu}+h_{\mu}=g_{\mu}+P_{\mu}h_{\mu}+\overrightarrow P_{\mu}J_{\mu}.
\ee
Moreover, we have
\be
\label{eq:vmutilde}
(I-\overrightarrow P_{\mu})h_{\mu}+v_{\mu}= P_{\mu}v_{\mu},
\ee
for some vector $v_{\mu}$.
\end{proposition}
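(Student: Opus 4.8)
The plan is to take the ACPS gain-bias identities that already hold for the mapped policy $\tilde\mu$ and pull them back through the substitutions defining $\widetilde P_\mu$ and $\tilde g_\mu$, so that everything is re-expressed in terms of the original quantities $P_\mu$ and $g_\mu$. By \eqref{eq:costequiv} and the definition $h_\mu := h^s_{\tilde\mu}$ we have $J_\mu = J^s_{\tilde\mu}$ and $h_\mu = h^s_{\tilde\mu}$, while $P_{\tilde\mu} = \widetilde P_\mu$ and $g_{\tilde\mu} = \tilde g_\mu$. Hence the ACPS relations \eqref{eq:helpereq}, applied to $\tilde\mu$, read $J_\mu = \widetilde P_\mu J_\mu$ and $J_\mu + h_\mu = \tilde g_\mu + \widetilde P_\mu h_\mu$. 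The key simplification is to substitute the closed forms $\widetilde P_\mu = (I-\overrightarrow P_\mu)^{-1}\overleftarrow P_\mu$ from \eqref{eq:pmumatrixform} and $\tilde g_\mu = (I-\overrightarrow P_\mu)^{-1} g_\mu$ from \eqref{eq:gmumatrixform}, and then clear the inverse by pre-multiplying with the matrix $I-\overrightarrow P_\mu$, which is nonsingular by Prop. \ref{prop:matrixI-PmuInvertible} since $\mu$ is proper.

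For the first identity, starting from $J_\mu = (I-\overrightarrow P_\mu)^{-1}\overleftarrow P_\mu J_\mu$ and pre-multiplying by $I-\overrightarrow P_\mu$ gives $(I-\overrightarrow P_\mu)J_\mu = \overleftarrow P_\mu J_\mu$, i.e. $J_\mu = (\overrightarrow P_\mu + \overleftarrow P_\mu)J_\mu = P_\mu J_\mu$, using the splitting $P_\mu = \overleftarrow P_\mu + \overrightarrow P_\mu$ from Fig. \ref{fig:splitPmu}. For the second relation I would write the ACPS bias equation, after the same substitution and pre-multiplication, as $(I-\overrightarrow P_\mu)(J_\mu+h_\mu) = g_\mu + \overleftarrow P_\mu h_\mu$, expand the left-hand side, and move the $\overrightarrow P_\mu$ term multiplying $J_\mu$ to the right. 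Recombining $\overleftarrow P_\mu h_\mu + \overrightarrow P_\mu h_\mu = P_\mu h_\mu$ then yields exactly $J_\mu + h_\mu = g_\mu + P_\mu h_\mu + \overrightarrow P_\mu J_\mu$, which is \eqref{eq:helpereqmu}.

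For \eqref{eq:vmutilde}, I would set $v_\mu := v^s_{\tilde\mu}$, the vector guaranteed by \eqref{eq:vmu} for the ACPS problem on $\tilde\mu$, so that $h_\mu + v_\mu = \widetilde P_\mu v_\mu$. Substituting the closed form of $\widetilde P_\mu$ and pre-multiplying by $I-\overrightarrow P_\mu$ gives $(I-\overrightarrow P_\mu)h_\mu + (I-\overrightarrow P_\mu)v_\mu = \overleftarrow P_\mu v_\mu$; moving $\overrightarrow P_\mu v_\mu$ across and recombining $\overleftarrow P_\mu v_\mu + \overrightarrow P_\mu v_\mu = P_\mu v_\mu$ produces $(I-\overrightarrow P_\mu)h_\mu + v_\mu = P_\mu v_\mu$, as claimed.

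There is no genuinely hard step here: once the ACPS identities are transported to $\tilde\mu$ and the closed forms are inserted, the proof is pure linear-algebraic bookkeeping. The only point requiring care is the justification for clearing the inverse, namely that $I-\overrightarrow P_\mu$ is invertible, which is precisely what Prop. \ref{prop:matrixI-PmuInvertible} supplies under the standing assumption that $\mu$ is proper, together with the consistent use of the decomposition $P_\mu = \overleftarrow P_\mu + \overrightarrow P_\mu$ to collapse the split terms back together.
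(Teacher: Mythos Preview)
Your proposal is correct and follows essentially the same approach as the paper: both start from the ACPS identities \eqref{eq:helpereq} and \eqref{eq:vmu} specialized to $\tilde\mu$, substitute the closed forms \eqref{eq:pmumatrixform} and \eqref{eq:gmumatrixform}, pre-multiply by $I-\overrightarrow P_\mu$ (invertible by Prop.~\ref{prop:matrixI-PmuInvertible}), and recombine using $P_\mu=\overleftarrow P_\mu+\overrightarrow P_\mu$. Your write-up even makes explicit the intermediate step $(I-\overrightarrow P_\mu)v_\mu$ in the derivation of \eqref{eq:vmutilde}, which the paper elides.
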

\begin{proof}
We start from \eqref{eq:helpereq}:
\be
\label{eq:helpereqtilde}
J_{\mu}=P_{\tilde\mu}J_{\mu}, \hspace{.5cm} J_{\mu}+h_{\mu}=g_{\tilde\mu}+P_{\tilde\mu}h_{\mu}.
\ee
For the first equation in \eqref{eq:helpereqtilde}, using \eqref{eq:pmumatrixform}, we have
\bean
\label{eq:jmusimplify}
J_{\mu}&=&P_{\tilde\mu}J_{\mu}\\
J_{\mu}&=&(I-\overrightarrow P_{\mu})^{-1}\overleftarrow P_{\mu}J_{\mu}\\
(I-\overrightarrow P_{\mu})J_{\mu}&=&\overleftarrow P_{\mu}J_{\mu}\\
J_{\mu}-\overrightarrow P_{\mu}J_{\mu}&=&\overleftarrow P_{\mu}J_{\mu}\\
J_{\mu}&=&(\overrightarrow P_{\mu}+\overleftarrow P_{\mu})J_{\mu}\\
J_{\mu}&=&P_{\mu}J_{\mu}.
\eean
For the second equation in \eqref{eq:helpereqtilde}, using \eqref{eq:pmumatrixform} and \eqref{eq:gmumatrixform}, we have
\bean
\label{eq:hmusimplify}
J_{\mu}+h_{\mu}&=&g_{\tilde\mu}+P_{\tilde\mu}h_{\mu}\\
J_{\mu}+h_{\mu}&=&(I-\overrightarrow P_{\mu})^{-1}(g_{\mu}+\overleftarrow P_{\mu}h_{\mu})\\
(I-\overrightarrow P_{\mu})(J_{\mu}+h_{\mu})&=&g_{\mu}+\overleftarrow P_{\mu}h_{\mu}\\
J_{\mu}-\overrightarrow P_{\mu} J_{\mu}+h_{\mu}-\overrightarrow P_{\mu}h_{\mu}&=&g_{\mu}+\overleftarrow P_{\mu}h_{\mu}\\
J_{\mu}+h_{\mu}-\overrightarrow P_{\mu}J_{\mu}&=&g_{\mu}+(\overrightarrow P_{\mu}+\overleftarrow P_{\mu})h_{\mu}\\
J_{\mu}+h_{\mu}&=&g_{\mu}+P_{\mu}h_{\mu}+\overrightarrow P_{\mu}J_{\mu}.
\eean
Thus, \eqref{eq:helpereqtilde} can be expressed in terms of $\mu$ as:
\ben
J_{\mu}=P_{\mu}J_{\mu},  \hspace{.5cm} J_{\mu}+h_{\mu}=g_{\mu}+P_{\mu}h_{\mu}+\overrightarrow P_{\mu}J_{\mu}.
\een
To compute $J_{\mu}$ and $h_{\mu}$, we need an extra equation similar to \eqref{eq:vmu}.  Using \eqref{eq:vmu}, we have
\bean
h_{\mu}+v_{\mu}&=&P_{\tilde\mu}v_{\mu}\n
h_{\mu}+v_{\mu}&=&(I-\overrightarrow P_{\mu})^{-1}\overleftarrow P_{\mu}v_{\mu}\n
(I-\overrightarrow P_{\mu})h_{\mu}+v_{\mu}&=& P_{\mu}v_{\mu},
\eean
which completes the proof.
\end{proof}

From Prop. \ref{prop:Jmuhmu}, similar to the ACPS problem, $(J_{\mu},h_{\mu},v_{\mu})$ can be solved together by a linear system of $3n$ equations and $3n$ unknowns.  The insight gained when simplifying $J_{\mu}$ and $h_{\mu}$ in terms of $\mu$ motivate us to propose the following optimality condition for an optimal policy.

\begin{figure*}[!ht]
   \centering
   \subfloat[]{\includegraphics[scale=0.35]{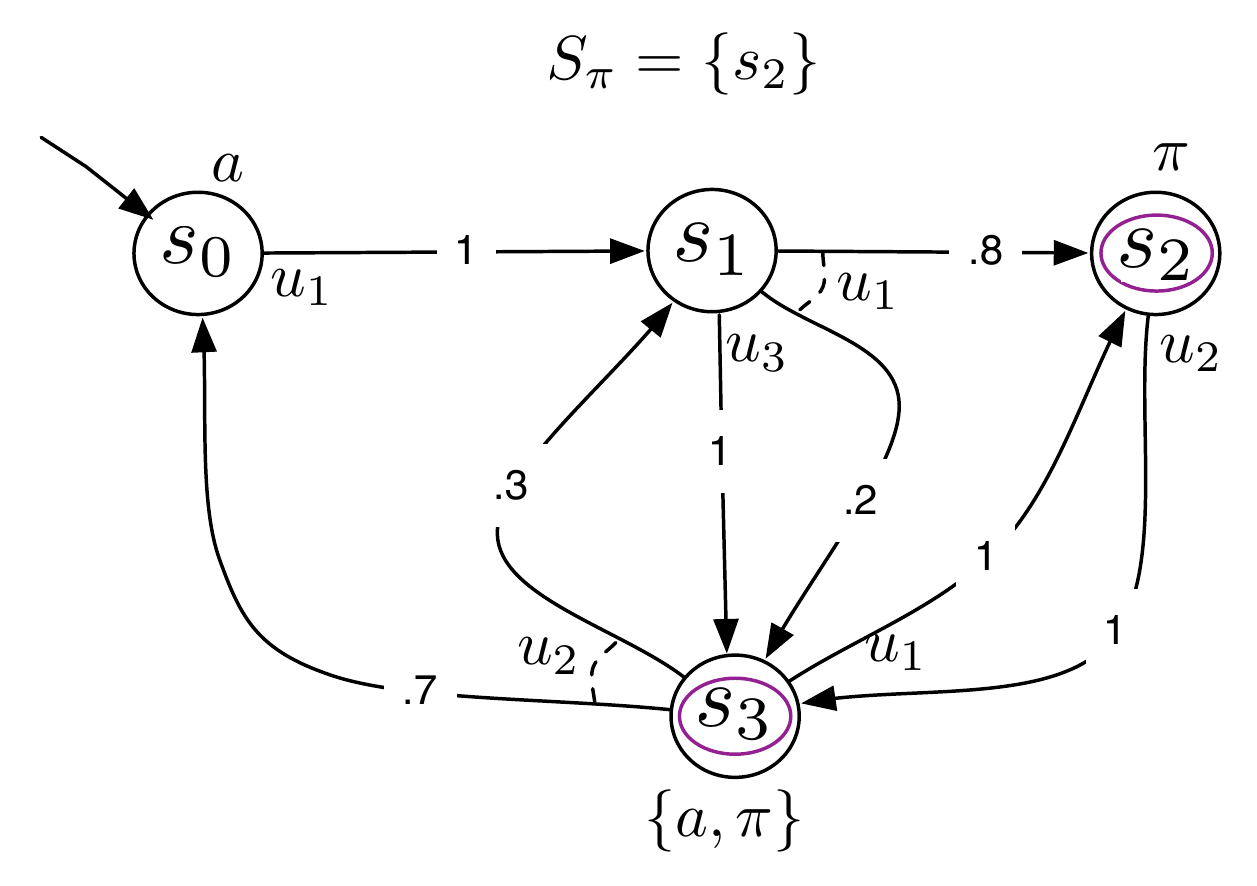}}
   \subfloat[]{\includegraphics[scale=0.35]{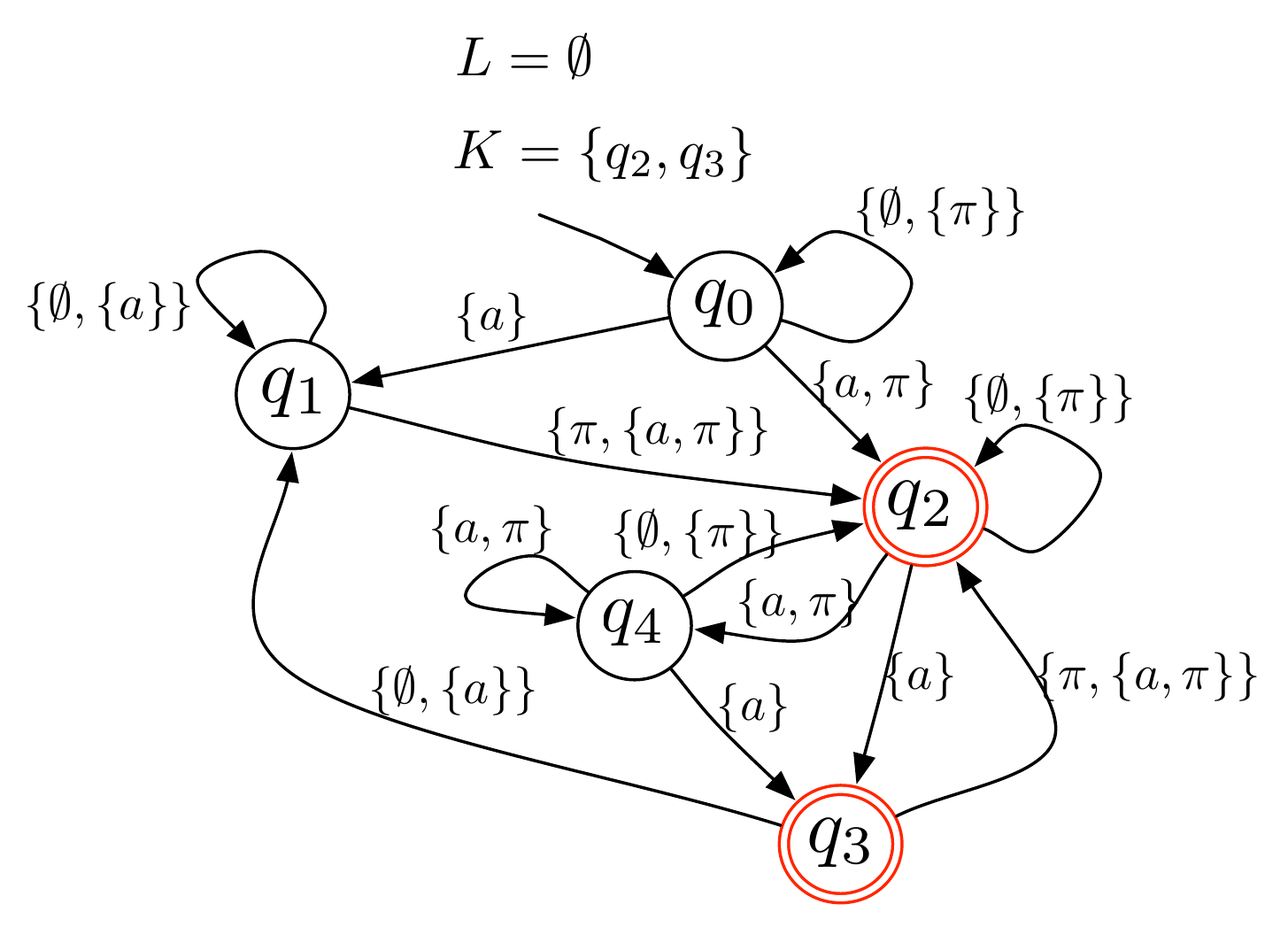}}
   \subfloat[]{\includegraphics[scale=0.35]{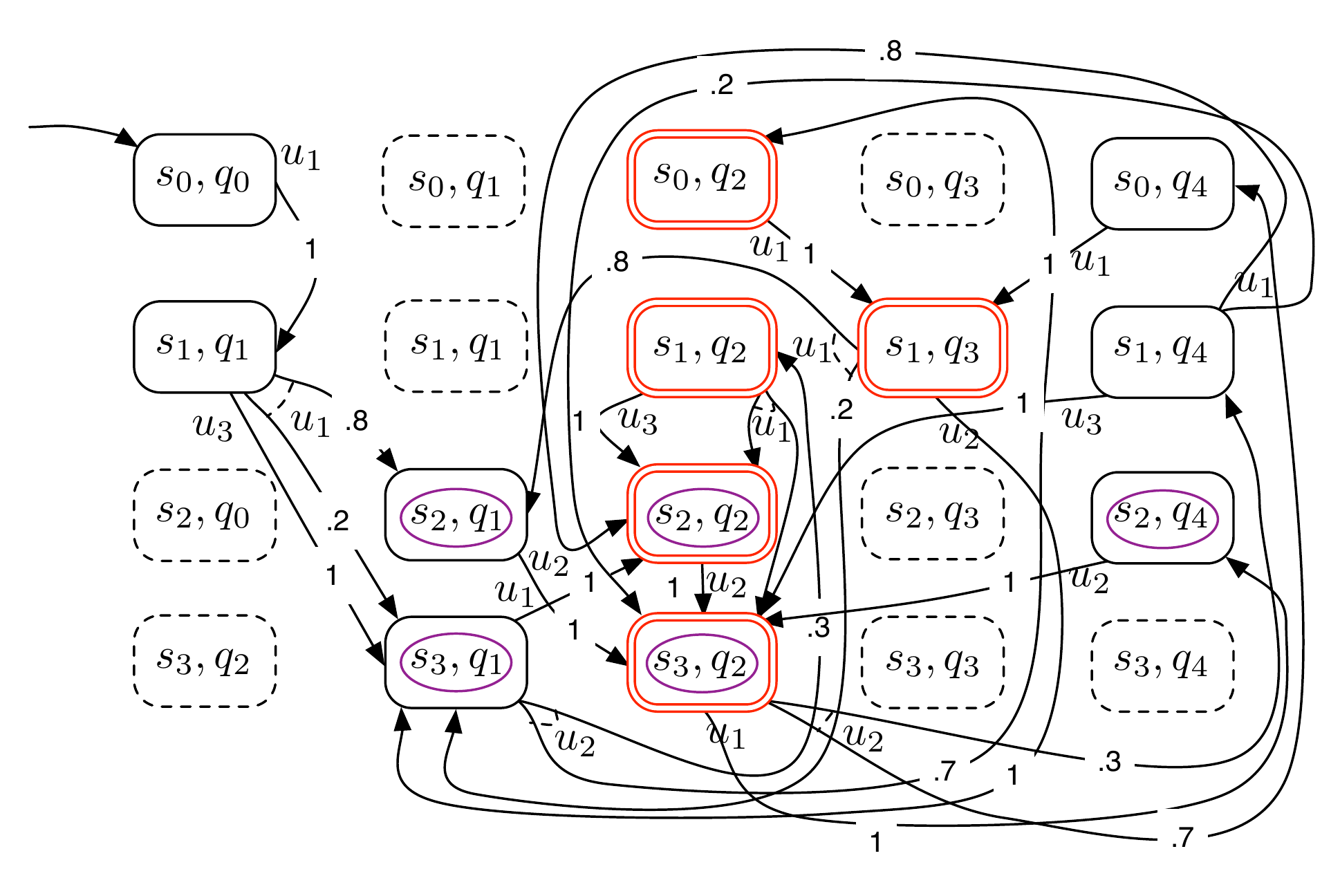}}
\caption{The construction of the product MDP between a labeled MDP and a DRA.  In this example, the set of atomic proposition is $\{a,\pi\}$.  {\bf (a)}: A labeled MDP where the label on top of a state denotes the atomic propositions assigned to the state.  The number on top of an arrow pointing from a state $s$ to $s'$ is the probability $P(s,u,s')$ associated with a control $u\in U(s)$.  The set of states marked by ovals is $S_{\pi}$.  {\bf (b)}: The DRA $\mathcal R_{\phi}$ corresponding to the LTL formula $\phi=\gl\ev \pi \andltl \gl\ev a$.  In this example, there is one set of accepting states $F=\{(L,K)\}$ where $L=\emptyset$ and $K=\{q_{2},q_{3}\}$ (marked by double-strokes).   Thus, accepting runs of this DRA must visit $q_{2}$ or $q_{3}$ (or both) infinitely often.  {\bf (c)}: The product MDP $\mathcal P=\mathcal M\times\mathcal R_{\phi}$ where states of $K_{\mathcal P}$ are marked by double-strokes and states of $S_{\mathcal P\pi}$ are marked by ovals.   The states with dashed borders are unreachable, and they are removed from $S_{\mathcal P}$.}
\label{fig:MDP-DRA-Product}
\end{figure*}

\begin{proposition}
\label{prop:optimalityconditiongeneral}
The policy $\mu^{\star}$ with gain-bias pair $(\lambda\bo,h)$ satisfying
\begin{multline}
\label{eq:bellmanpercycle}
\lambda+h(i)=\min_{u\in U(i)}\\\bigg[g(i,u)+\sum_{j=1}^{n} P(i,u,j)h(j)+\lambda\sum_{j=m+1}^{n} P(i,u,j)\bigg],
\end{multline}
for all $i=1,\ldots,n$, is the optimal policy minimizing \eqref{eq:averagecostpercycle} over all policies in $\mathbb M$.
\end{proposition}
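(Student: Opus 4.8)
The plan is to show that the single pointwise condition \eqref{eq:bellmanpercycle} is equivalent to the matrix condition \eqref{eq:bellmaninmatrix}, whose optimality over $\mathbb M$ was already established in Prop. \ref{prop:optimalpolicyIsStationary}. By Prop. \ref{prop:propersufficient} it suffices to compete only against proper stationary policies, and by Prop. \ref{prop:optimalcostsame} the optimal cost is state-independent, so I search for a constant gain $\lambda\bo$. The whole argument is then a passage between the ``$\mu$-form'' of the equations and the ``$\tilde\mu$-form'', using the closed expressions \eqref{eq:pmumatrixform} and \eqref{eq:gmumatrixform}.

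First I would read \eqref{eq:bellmanpercycle} in matrix form. Writing $\overrightarrow{P}_{\mu}\bo$ for the vector whose $i$-th entry is $\sum_{j=m+1}^{n}P(i,\mu(i),j)$, the bracketed expression in \eqref{eq:bellmanpercycle} evaluated at $u=\mu(i)$ is exactly the $i$-th entry of $g_{\mu}+P_{\mu}h+\lambda\overrightarrow{P}_{\mu}\bo$. For the minimizing policy $\mu^{\star}$ the equation holds with equality, giving $\lambda\bo+h=g_{\mu^{\star}}+P_{\mu^{\star}}h+\overrightarrow{P}_{\mu^{\star}}(\lambda\bo)$, which is precisely the second relation of \eqref{eq:helpereqmu} with $J_{\mu^{\star}}=\lambda\bo$; the first relation $\lambda\bo=P_{\mu^{\star}}(\lambda\bo)$ holds trivially since $P_{\mu^{\star}}$ is stochastic. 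By uniqueness of the gain solving the gain-bias equations, this identifies $(\lambda\bo,h)$ as the gain-bias pair of $\mu^{\star}$, so $\lambda$ is the ACPC actually achieved by $\mu^{\star}$.

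The core step is the competition argument. For an arbitrary proper $\mu$, taking $u=\mu(i)$ in the minimization of \eqref{eq:bellmanpercycle} yields the component-wise inequality $\lambda\bo+h\le g_{\mu}+P_{\mu}h+\lambda\overrightarrow{P}_{\mu}\bo$. Substituting $P_{\mu}=\overleftarrow{P}_{\mu}+\overrightarrow{P}_{\mu}$ and collecting the $\overrightarrow{P}_{\mu}$ terms, this rearranges to $(I-\overrightarrow{P}_{\mu})(\lambda\bo+h)\le g_{\mu}+\overleftarrow{P}_{\mu}h$. Now I would multiply both sides by $(I-\overrightarrow{P}_{\mu})^{-1}$. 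The crucial observation, and the step that needs care, is that this inverse is a \emph{non-negative} matrix: since $\mu$ is proper, $\overrightarrow{P}_{\mu}$ is transient (Prop. \ref{prop:matrixI-PmuInvertible}), so $(I-\overrightarrow{P}_{\mu})^{-1}=\sum_{k\ge 0}\overrightarrow{P}_{\mu}^{\,k}\ge 0$, and multiplication by a non-negative matrix preserves the direction of a component-wise inequality. Invoking \eqref{eq:pmumatrixform} and \eqref{eq:gmumatrixform}, the right-hand side becomes $\tilde g_{\mu}+\widetilde{P}_{\mu}h=g_{\tilde\mu}+P_{\tilde\mu}h$, so I obtain exactly $\lambda\bo+h\le g_{\tilde\mu}+P_{\tilde\mu}h$, which is \eqref{eq:bellmaninmatrix}.

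Since every proper $\mu$ maps onto some $\tilde\mu\in\mathbb{M}_{\tilde\mu}$, this inequality holds for all such $\tilde\mu$, and therefore by Prop. \ref{prop:optimalpolicyIsStationary} the policy $\mu^{\star}$ attaining \eqref{eq:bellmanpercycle} is optimal over all of $\mathbb{M}$, stationary or not. I expect the monotonicity bookkeeping to be the main obstacle: one must resist multiplying through by $I-\overrightarrow{P}_{\mu}$ itself, which has negative off-diagonal entries and need not preserve inequalities, and instead multiply by its non-negative Neumann-series inverse, justifying non-negativity from the transience of $\overrightarrow{P}_{\mu}$. Everything else reduces to the algebraic rearrangement above and the already-established equivalences between the $\mu$ and $\tilde\mu$ descriptions.
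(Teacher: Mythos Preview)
Your proposal is correct and follows essentially the same route as the paper: rewrite \eqref{eq:bellmanpercycle} as the matrix inequality $\lambda\bo+h\le g_{\mu}+P_{\mu}h+\overrightarrow{P}_{\mu}\lambda\bo$, rearrange by subtraction to $(I-\overrightarrow{P}_{\mu})(\lambda\bo+h)\le g_{\mu}+\overleftarrow{P}_{\mu}h$, then multiply by the nonnegative Neumann-series inverse $(I-\overrightarrow{P}_{\mu})^{-1}$ (valid because $\mu$ is proper, hence $\overrightarrow{P}_{\mu}$ is transient) to obtain \eqref{eq:bellmaninmatrix}, and finish with Prop.~\ref{prop:optimalpolicyIsStationary}. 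The only addition is your explicit verification that the equality case of \eqref{eq:bellmanpercycle} recovers \eqref{eq:helpereqmu} for $\mu^{\star}$, which the paper leaves implicit in the hypothesis.
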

\begin{proof}
The optimality condition \eqref{eq:bellmanpercycle} can be written as:
\be
\label{eq:bellmanmatrixformu}
\lambda\bo+h\leq g_{\mu}+P_{\mu}h+\overrightarrow P_{\mu}\lambda\bo,
\ee
for all stationary policies $\mu$.   

Note that, given $a,b\in \mathbb R^{n}$ and $a\leq b$, if $A$ is an $n\times n$ matrix with all non-negative entries, then $Aa\leq Ab$.  Moreover, given $c\in \mathbb R^{n}$, we have $a+c\leq b+c$. 

From \eqref{eq:bellmanmatrixformu} we have
\bea
\label{eq:bellmaninmatrixrerranged1}
\lambda\bo+h&\leq&g_{\mu}+P_{\mu}h+\overrightarrow{P}_{\mu}\lambda\bo\n
\lambda\bo-\overrightarrow{P}_{\mu}\lambda\bo+h&\leq&g_{\mu}+P_{\mu}h\n
\lambda\bo-\overrightarrow{P}_{\mu}\lambda\bo+h&\leq&g_{\mu}+(\overleftarrow{P}_{\mu}+\overrightarrow{P}_{\mu})h\n
\lambda\bo-\overrightarrow{P}_{\mu}\lambda\bo+h-\overrightarrow{P}_{\mu}h&\leq&g_{\mu}+\overleftarrow{P}_{\mu}h\n
 (I-\overrightarrow{P}_{\mu})(\lambda\bo+h)&\leq&g_{\mu}+\overleftarrow{P}_{\mu}h
\eea

If $\mu$ is proper, then $\overrightarrow{P}_{\mu}$ is a transient matrix (see proof of Prop. \ref{prop:matrixI-PmuInvertible}), and all of its eigenvalues are strictly inside the unit circle.  Therefore, we have
\ben
(I-\overrightarrow{P}_{\mu})^{-1}=I+\overrightarrow{P}_{\mu}+\overrightarrow{P}_{\mu}^{2}+\ldots.
\een
Therefore, since all entries of $\overrightarrow P_{\mu}$ are non-negative, all entries of $(I-\overrightarrow{P}_{\mu})^{-1}$ are non-negative.  Thus, continuing from \eqref{eq:bellmaninmatrixrerranged1}, we have
\bea
(I-\overrightarrow{P}_{\mu})(\lambda\bo+h)&\leq&g_{\mu}+\overleftarrow{P}_{\mu}h\n
\lambda\bo+h&\leq& (I-\overrightarrow{P}_{\mu})^{-1}(g_{\mu}+\overleftarrow{P}_{\mu}h)\n
\lambda\bo+h&\leq& g_{\tilde\mu}+P_{\tilde\mu}h\nonumber
\eea
for all proper policies $\mu$ and all $\tilde\mu\in \mathbb M_{\tilde\mu}$.  Hence, $\tilde\mu^{\star}$ satisfies \eqref{eq:bellmaninmatrix} and $\mu^{\star}$ is optimal over all proper policies.  Using Prop. \ref{prop:optimalpolicyIsStationary}, the proof is complete.
\end{proof}

We will present an algorithm that uses Prop. \ref{prop:optimalityconditiongeneral} to find the optimal policy in the next section.  Note that, unlike \eqref{eq:bellmaninmatrix}, \eqref{eq:bellmanpercycle} can be checked for any policy $\mu$ by finding the minimum for all states $i=1,\ldots,n$, which is significantly easier than searching over all proper policies.


\section{Synthesizing the optimal policy under LTL constraints}
\label{sec:solveLTLprob}
In this section we outline an approach for Prob. \ref{prob:mainprob}.  
We aim for a computational framework, which in combination with results of \cite{IFAC2011_LTL} produces a policy that both maximizes the satisfaction probability and optimizes the long-term performance of the system, using results from Sec. \ref{sec:solutiontoaveragecycle}.

\subsection{Automata-theoretical approach to LTL control synthesis}
\label{sec:sub:maxprobbackground}
Our approach proceeds by converting the LTL formula $\phi$ to a DRA as defined in Def.~\ref{def:DRA}.  We denote the resulting DRA as $\mathcal R_{\phi}=(Q,2^{\Pi},\delta,q_{0},F)$ with $F=\{(L(1),K(1)),\ldots ,(L(M),K(M))\}$ where $L(i),K(i)\subseteq Q$ for all $i=1,\ldots,M$.    We now obtain an MDP as the product of a labeled MDP $\mathcal M$ and a DRA $\mathcal R_{\phi}$, which captures all paths of $\mathcal M$ satisfying $\phi$.
\begin{definition}[Product MDP]
The product MDP $\mathcal M \times \mathcal R_{\phi}$ between a labeled MDP $\mathcal M=(S, U, P, s_{0}, \Pi, \mathcal L, g)$ and a DRA $\mathcal R_{\phi}=(Q,2^{\Pi},\delta,q_{0},F)$ is obtained from a tuple $\mathcal P=(S_{\mathcal P}, U, P_{\mathcal P},s_{\mathcal P0}, F_{\mathcal P}, S_{\mathcal P\pi},g_{\mathcal P})$, where 
\begin{enumerate}
 \item $S_{\mathcal P}= S\times Q$ is a set of states;
 \item $U$ is a set of controls inherited from $\mathcal M$ and we
   define $U_{\mathcal P}((s,q))=U(s)$;
 \item $P_{\mathcal P}$ gives the transition probabilities: 
\ben
P_{\mathcal P}((s,q),u,(s',q'))\! =\!\begin{cases} P(s,u,s') & \textrm{if } q'=\delta(q,\mathcal L(s)) \\ 0 & \rm otherwise; \end{cases}
\een
 \item $s_{\mathcal P0}=(s_{0},q_{0})$ is the initial state; 
 \item $F_{\mathcal P}=\{(L_{\mathcal P}(1), K_{\mathcal P}(1)),\ldots,(L_{\mathcal P}(M), K_{\mathcal P}(M))\}$ where $L_{\mathcal P}(i)=S\times L(i)$, $K_{\mathcal P}(i)=S\times K(i)$, for $i=1,\ldots,M$;
 \item $S_{\mathcal P\pi}$ is the set of states in $S_{\mathcal P}$ for which proposition $\pi$ is satisfied.  Thus, $S_{\mathcal P\pi}=S_{\pi}\times Q$;
 \item $g_{\mathcal P}((s,q),u)=g(s,u)$ for all $(s,q)\in S_{\mathcal P}$;
\end{enumerate}
\end{definition}
Note that some states of $S_{\mathcal P}$ may be unreachable and therefore have no control available.  After removing those states (via a simple graph search), $P$ is a valid MDP and is the desired product MDP.  With a slight abuse of notation we still denote the product MDP as $\mathcal P$ and always assume that unreachable states are removed.   An example of a product MDP between a labeled MDP and a DRA corresponding to the LTL formula $\phi=\gl\ev \pi \andltl \gl\ev a$ is shown in Fig. \ref{fig:MDP-DRA-Product}.  

There is an one-to-one correspondence between a path
$s_{0}s_{1},\ldots$ on $\mathcal M$ and a path
$(s_{0},q_{0})(s_{1},q_{1})\ldots$ on $\mathcal P$.   Moreover, from the
definition of $g_{\mathcal P}$, the costs along these two paths are
the same.  The product MDP is constructed so that, given a path $(s_{0},q_{0})(s_{1},q_{1})\ldots$, the corresponding path $s_{0}s_{1}\ldots$ on $\mathcal M$ generates a word satisfying $\phi$ if and only if, there exists $(L_{\mathcal P}, K_{\mathcal P})\in F_{\mathcal P}$ such that the set $K_{\mathcal P}$ is visited infinitely often and $L_{\mathcal P}$ finitely often.


A similar one-to-one correspondence exists for policies:
\begin{definition}[Inducing a policy from $\mathcal P$]
Given policy $M_{\mathcal P}=\{\mu_{0}^{\mathcal P},\mu_{1}^{\mathcal P},\ldots\}$ on $\mathcal P$, where $\mu^{\mathcal P}_{k}((s,q))\in U_{\mathcal P}((s,q))$, it induces policy $M=\{\mu_{0},\mu_{1},\ldots\}$ on $\mathcal M$ by setting $\mu_{k}(s_{k})=\mu_{k}^{\mathcal P}((s_{k},q_{k}))$ for all $k$.   We denote $M_{\mathcal P}|_{\mathcal M}$ as the policy induced by $M_{\mathcal P}$, and we use the same notation for a set of policies.  
\end{definition}

An induced policy can be implemented on $\mathcal M$ by simply keeping track of its current state on $\mathcal P$.  Note that a stationary policy on $\mathcal P$ induces a non-stationary policy on $\mathcal M$. 
From the one-to-one correspondence between paths and the equivalence of their costs, 
the expected cost in \eqref{eq:averagecostpercycle} from initial state $s_{0}$ under $M_{\mathcal P}|_{\mathcal M}$ is equal to the expected cost from initial state $(s_{0},q_{0})$ under $M_{\mathcal P}$.



For each pair of states $(L_{\mathcal P},K_{\mathcal P})\in F_{\mathcal P}$, we can obtain a set of accepting maximal end components (AMEC):
\begin{definition}[Accepting Maximal End Components] Given $(L_{\mathcal P},K_{\mathcal P})\in F_{\mathcal P}$,
an end component $\mathcal C$ is a communicating MDP $(S_{\mathcal C},U_{\mathcal C}, P_{\mathcal C}, K_{\mathcal C}, S_{\mathcal C\pi}, g_{\mathcal C})$ such that $S_{\mathcal C}\subseteq S_{\mathcal P}$, $U_{\mathcal C}\subseteq U_{\mathcal P}$, $U_{\mathcal C}(i) \subseteq U(i)$ for all $i\in S_{\mathcal C}$, $K_{\mathcal C}=S_{\mathcal C}\cap K_{\mathcal P}$, $S_{\mathcal C\pi}=S_{\mathcal C}\cap S_{\mathcal P\pi}$, and $g_{\mathcal C}(i,u)=g_{\mathcal P}(i,u)$ if $i\in S_{\mathcal C}$, $u\in U_{\mathcal C}(i)$. If $P(i,u,j)>0$ for any $i\in S_{\mathcal C}$ and $u\in U_{\mathcal C}(i)$, then $j\in S_{\mathcal C}$, in which case $P_{\mathcal C}(i,u,j)=P(i,u,j)$.  An accepting maximal end components (AMEC) is the largest such end component such that $K_{\mathcal C}\neq \emptyset$ and $S_{\mathcal C}\cap L_{\mathcal P}= \emptyset$.
\end{definition}
Note that, an AMEC always contains at least one state in $K_{\mathcal P}$ and no state in $L_{\mathcal P}$.   Moreover, it is ``absorbing'' in the sense that the state does not leave an AMEC once entered.  
In the example shown in Fig. \ref{fig:MDP-DRA-Product}, there exists only one AMEC corresponding to $(L_{\mathcal P}, K_{\mathcal P})$, which is the only pair of states in $F_{\mathcal P}$, and the states of this AMEC are shown in Fig. \ref{fig:AMEC}.

\begin{figure}[ht]
   \center
         \includegraphics[scale=0.35]{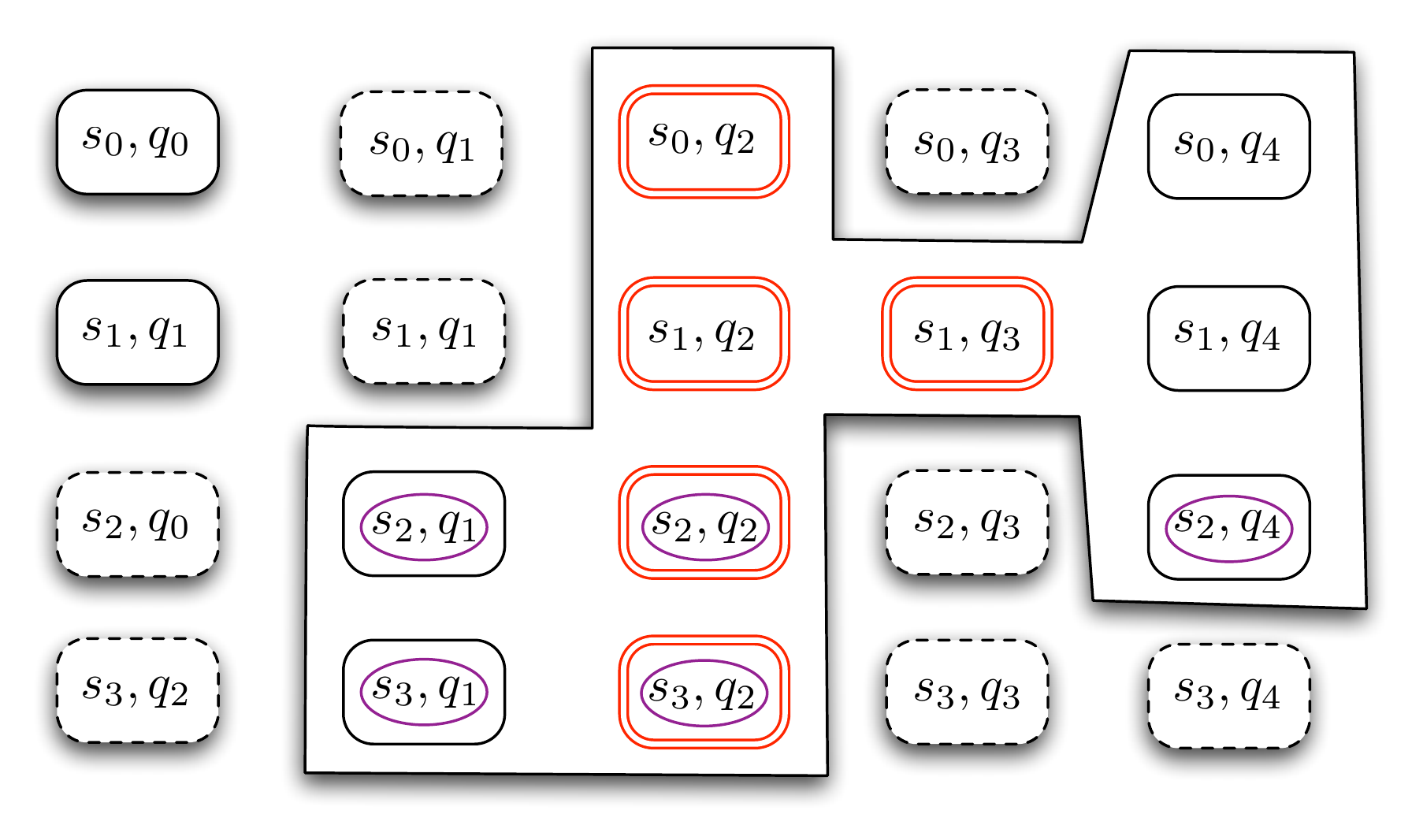}
\caption{The states of the only AMEC corresponding to the product MDP in Fig. \ref{fig:MDP-DRA-Product}.}
\label{fig:AMEC}
\end{figure}

A procedure to obtain all AMECs of an MDP was provided in \cite{baier2008principles}.  
From probabilistic model checking, a policy $M=M_{\mathcal P}|_{\mathcal M}$ almost surely satisfies $\phi$ (\ie $M\in\mathbb M_{\phi}$) if and only if, under policy $M_{\mathcal P}$, there exists AMEC $\mathcal C$ such that the probability of reaching $S_{\mathcal C}$ from initial state $(s_{0},q_{0})$ is $1$ (in this case, we call $\mathcal C$ a \emph{reachable} AMEC).
In \cite{IFAC2011_LTL}, such an optimal policy was found by dynamic programming or solving a linear program.  For states inside $\mathcal C$, since $\mathcal C$ itself is a communicating MDP, a policy (not unique) can be easily constructed such that a state in $K_{\mathcal C}$ is infinitely often visited, satisfying the LTL specification.


\subsection{Optimizing the long-term performance of the MDP}
\label{sec:sub:longtermoptmize}
For a control policy designed to satisfy an LTL formula, the system behavior outside an AMEC is transient, while the behavior inside an AMEC is long-term.    The policies obtained in  \cite{IFAC2011_LTL,baier2004controller,courcoubetis1998markov} essentially disregard the behavior inside an AMEC, because,
from the verification point of view, the behavior inside an AMEC is for the most part irrelevant, as long as a state in $K_{\mathcal P}$ is visited infinitely often.  
We now aim to optimize the long-term behavior of the MDP with respect to the ACPC cost function, while enforcing the satisfaction constraint.  Since each AMEC is a communicating MDP, we can use results in Sec. \ref{sec:sub:optimalACPC} to help obtaining a solution.  Our approach consists of the following steps:
\begin{enumerate}
 \item Convert formula $\phi$ to a DRA $\mathcal R_{\phi}$ and obtain the product MDP $\mathcal P$ between $\mathcal M$ and $\mathcal R_{\phi}$;
 \item Obtain the set of reachable AMECs, denoted as $\mathcal A$;
 \item For each $\mathcal C\in \mathcal A$: Find a stationary policy $\mu^{\star}_{\rightarrow \mathcal C}(i)$, defined for $i\in S\setminus S_{\mathcal C}$, that reaches $S_{\mathcal C}$ with probability $1$ ($\mu^{\star}_{\rightarrow \mathcal C}$ is guaranteed to exist and obtained as in \cite{IFAC2011_LTL}); Find a stationary policy $\mu^{\star}_{\circlearrowright\mathcal C}(i)$, defined for $i\in S_{\mathcal C}$ minimizing \eqref{eq:averagecostpercycle} for MDP $\mathcal C$ and set $S_{\mathcal C\pi}$ while satisfying the LTL constraint; Define $\mu^{\star}_{\mathcal C}$ to be:
 \be
\label{eq:optimalpolicy}
\mu^{\star}_{\mathcal C}=\left\{\ba{cc} \mu^{\star}_{\rightarrow\mathcal C}(i) & \text{if } i\notin S_{\mathcal C}\\ \mu^{\star}_{\circlearrowright\mathcal C}(i) & \text{if } i\in S_{\mathcal C}\ea\right.,
\ee
and denote the ACPC of $\mu^{\star}_{\circlearrowright\mathcal C}$ as $\lambda_{\mathcal C}$;
 \item We find the solution to Prob. \ref{prob:mainprob} by:
\be
\label{eq:optimalcostoverall}
J^{\star}(s_{0})=\min_{\mathcal C\in \mathcal A}\lambda_{\mathcal C},
\ee
and the optimal policy is $\mu^{\star}_{{\mathcal C}^{\star}}|_{\mathcal M}$, where $\mathcal C^{\star}$ is the AMEC attaining the minimum in \eqref{eq:optimalcostoverall}.
\end{enumerate}

We now provide the sufficient conditions for a policy $\mu^{\star}_{\circlearrowright\mathcal C}$ to be optimal.    Moreover, if an optimal policy $\mu^{\star}_{\circlearrowright\mathcal C}$ can be obtained for each $\mathcal C$, we show that the above procedure indeed gives the optimal solution to Prob. \ref{prob:mainprob}. 
\begin{proposition}
\label{prop:optimalPolicyOneAMEC}
For each $\mathcal C\in \mathcal A$, let $\mu^{\star}_{\mathcal C}$ to
be constructed as in \eqref{eq:optimalpolicy}, where
$\mu^{\star}_{\circlearrowright\mathcal C}$ is a stationary policy
satisfying two optimality conditions: (i) its ACPC gain-bias pair is
$(\lambda_{\mathcal C}\bo,h)$, where 
\begin{multline}
\label{eq:bellmanpercycleAMEC}
\lambda_{\mathcal C}+h(i)=\min_{u\in U_{\mathcal
    C}(i)}\bigg[g_{\mathcal C}(i,u)+\sum_{j\in S_{\mathcal C}}
P(i,u,j)h(j) \\+\lambda_{\mathcal C}\sum_{j\notin S_{\mathcal C\pi}} P(i,u,j)\bigg],
\end{multline}
for all $i\in S_{\mathcal C}$, and (ii) there exists a state of $K_{\mathcal C}$ in each recurrent class of $\mu^{\star}_{\circlearrowright\mathcal C}$.  
Then the optimal cost for Prob. \ref{prob:mainprob} is $J^{\star}(s_{0})=\min_{\mathcal C\in \mathcal A}\lambda_{\mathcal C}$,  and the optimal policy is $\mu^{\star}_{{\mathcal C}^{\star}}|_{\mathcal M}$, where $\mathcal C^{\star}$ is the AMEC attaining this minimum.
\end{proposition}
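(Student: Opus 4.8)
The plan is to establish the two inequalities that pin down the optimum: that the constructed policy $\mu^\star_{\mathcal C^\star}|_{\mathcal M}$ is feasible and attains cost $\min_{\mathcal C\in\mathcal A}\lambda_{\mathcal C}$, and that no policy in $\mathbb M_\phi$ can do strictly better.

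First I would verify feasibility together with the per-AMEC cost. The key observation is that \eqref{eq:bellmanpercycleAMEC} is precisely the optimality condition \eqref{eq:bellmanpercycle} of Prop.~\ref{prop:optimalityconditiongeneral}, instantiated on the communicating MDP $\mathcal C$ with distinguished set $S_{\mathcal C\pi}$ (the term $\sum_{j\notin S_{\mathcal C\pi}}P(i,u,j)$ plays the role of $\sum_{j=m+1}^{n}P(i,u,j)$). Hence condition~(i) certifies, via Prop.~\ref{prop:optimalityconditiongeneral}, that $\mu^\star_{\circlearrowright\mathcal C}$ minimizes the ACPC \eqref{eq:averagecostpercycle} over \emph{all} policies on $\mathcal C$, with optimal value $\lambda_{\mathcal C}$ independent of the start state by Prop.~\ref{prop:optimalcostsame}. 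Condition~(ii) then guarantees that $\mu^\star_{\circlearrowright\mathcal C}$ visits $K_{\mathcal C}$ infinitely often on each recurrent class; since an AMEC contains no state of $L_{\mathcal P}$ and is absorbing, every run produced is accepting, so $\mu^\star_{\mathcal C}|_{\mathcal M}\in\mathbb M_\phi$.

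Next I would show $J_{\mu^\star_{\mathcal C}}(s_0)=\lambda_{\mathcal C}$. Because $\mu^\star_{\rightarrow\mathcal C}$ reaches $S_{\mathcal C}$ with probability one and $\mathcal C$ is absorbing, almost every path enters $\mathcal C$ after finitely many stages, so the cost and the number of cycles accrued before absorption are finite and vanish under the $N\to\infty$ normalization in \eqref{eq:averagecostpercycle}. Thus the ACPC from $s_0$ coincides with the ACPC from inside $\mathcal C$, namely $\lambda_{\mathcal C}$, and by the one-to-one, cost-preserving correspondence between paths of $\mathcal P$ and $\mathcal M$ this is the realized cost on $\mathcal M$. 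Minimizing over $\mathcal C\in\mathcal A$ gives the upper bound $J^\star(s_0)\le\min_{\mathcal C}\lambda_{\mathcal C}$.

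The heart of the argument, and the step I expect to be the main obstacle, is the matching lower bound for an arbitrary, possibly non-stationary $M\in\mathbb M_\phi$. Here I would invoke the standard fact from probabilistic model checking that under any policy the set of states visited infinitely often is almost surely an end component, and that for $M$ to satisfy $\phi$ almost surely this limiting component must be accepting, hence contained in some AMEC $\mathcal C\in\mathcal A$. On the event that the path is absorbed into (an end component inside) $\mathcal C$, its tail is governed by a policy confined to $\mathcal C$; since $\lambda_{\mathcal C}$ is optimal over \emph{all} policies on $\mathcal C$ by Prop.~\ref{prop:optimalityconditiongeneral} (non-stationary policies, and policies whose recurrent support is a proper sub-end-component, are in particular policies on $\mathcal C$), the limiting per-cycle average along that tail is at least $\lambda_{\mathcal C}\ge\min_{\mathcal C'}\lambda_{\mathcal C'}$, with the finite transient prefix again washing out. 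Averaging over which AMEC is entered yields $J_M(s_0)\ge\min_{\mathcal C'}\lambda_{\mathcal C'}$. The delicate points to handle carefully are (a) the interchange of $\limsup$ and expectation for a ratio of two growing sums, which I would manage by reasoning pathwise on the almost-sure absorption event before taking expectations, and (b) checking that restricting a general $\mathcal P$-policy to its recurrent end component produces a legitimate policy on $\mathcal C$ to which Prop.~\ref{prop:optimalityconditiongeneral} applies. Combining the two bounds gives $J^\star(s_0)=\min_{\mathcal C\in\mathcal A}\lambda_{\mathcal C}$, attained by $\mu^\star_{\mathcal C^\star}|_{\mathcal M}$, as claimed.
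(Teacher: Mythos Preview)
Your proposal is correct and follows essentially the same skeleton as the paper: identify \eqref{eq:bellmanpercycleAMEC} as the instance of Prop.~\ref{prop:optimalityconditiongeneral} on the communicating MDP $\mathcal C$, use condition~(ii) to certify that the Rabin acceptance condition holds, argue that the transient segment before absorption in $S_{\mathcal C}$ contributes nothing to the ACPC so that $J_{\mu^\star_{\mathcal C}}(s_0)=\lambda_{\mathcal C}$, and then optimize over $\mathcal C\in\mathcal A$.

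The one place your argument differs in substance is the lower bound. The paper simply asserts the decomposition $\mathbb M_\phi=\bigcup_{\mathcal C\in\mathcal A}\mathbb M_{\mathcal C}|_{\mathcal M}$, where membership in $\mathbb M_{\mathcal C}$ requires reaching $S_{\mathcal C}$ with probability~$1$, and then shows $\mu^\star_{\mathcal C}$ is optimal over each piece. You instead invoke the end-component theorem pathwise, allow a general $M\in\mathbb M_\phi$ to enter different AMECs with different probabilities, bound each tail by the corresponding $\lambda_{\mathcal C}\ge\min_{\mathcal C'}\lambda_{\mathcal C'}$, and average. Your route is the more careful one: the paper's set equality is not literally true for policies that split probability across several AMECs, though the conclusion survives because one can always redirect all mass to the best AMEC. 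Your explicit flagging of the $\limsup$/expectation interchange and of the restriction of a $\mathcal P$-policy to a sub-end-component are genuine technical points that the paper does not address; they are not obstacles to the argument, but they are exactly the places where rigor is needed beyond what either proof spells out.
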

\begin{proof}
Given $\mathcal C\in \mathcal A$, define a set of policies $\mathbb M_{\mathcal C}$, such that for each policy in $\mathbb M_{\mathcal C}$: from initial state  $(s_{0}, q_{0})$, (i) $S_{\mathcal C}$ is reached with probability $1$, (ii) $S\setminus S_{\mathcal C}$ is not visited thereafter, and (iii) $K_{\mathcal C}$ is visited infinitely often.   We see that, by the definition of AMECs, a policy almost surely satisfying $\phi$ belongs to $\mathbb M_{\mathcal C}|_{\mathcal M}$ for a $\mathcal C\in \mathcal A$.  Thus, $\mathbb M_{\phi}=\cup_{\mathcal C\in\mathcal A}\mathbb M_{\mathcal C}|_{\mathcal M}$

Since $\mu_{\mathcal C}^{\star}(i)=\mu^{\star}_{\rightarrow\mathcal C}(i)$ if $i\notin S_{\mathcal C}$, the state reaches $S_{\mathcal C}$ with probability $1$ and in a finite number of stages.   We denote the probability that $j\in S_{\mathcal C}$ is the first state visited in $S_{\mathcal C}$ when $\mathcal C$ is reached from initial state $s_{\mathcal P0}$ as $\widetilde P_{\mathcal C}(j, \mu^{\star}_{\rightarrow\mathcal C}, s_{\mathcal P0})$.   Since the ACPC for the finite path from the initial state to a state $j\in S_\mathcal C$ is $0$ as the cycle index goes to $\infty$, the ACPC from initial state $s_{\mathcal P0}$ under policy $\mu^{\star}_{\mathcal C}$ is
\be
J(s_{0})=\sum_{j\in S_{\mathcal C}}\widetilde P_{\mathcal C}(j, \mu^{\star}_{\rightarrow\mathcal C}, s_{\mathcal P0}) J_{\mu^{\star}_{\circlearrowright\mathcal C}}(j).
\ee
Since $\mathcal C$ is communicating, the optimal cost is the same for all states of $S_{\mathcal C}$ (and thus it does not matter which state in $S_{\mathcal C}$ is first visited when $S_{\mathcal C}$ is reached).  We have 
\bea
J(s_{0})&=&\sum_{j\in S_{\mathcal C}}\widetilde P_{\mathcal C}(j, \mu^{\star}_{\rightarrow\mathcal C}, (s_{0},q_{0}))\lambda_{\mathcal C} \n
&=&\lambda_{\mathcal C}.
\eea
Applying Prop. \ref{prop:optimalityconditiongeneral}, we see that $\mu^{\star}_{\circlearrowright\mathcal C}$ satisfies the optimality condition for MDP $\mathcal C$ with respect to set $S_{\mathcal C\pi}$.  
Since there exists a state of $K_{\mathcal C}$ is in each recurrent class of $\mu^{\star}_{\circlearrowright\mathcal C}$, a state in $K_{\mathcal C}$ is visited infinitely often and it satisfies the LTL constraint.   Therefore, $\mu^{\star}_{\mathcal C}$ as constructed in \eqref{eq:optimalpolicy} is optimal over $\mathbb M_{\mathcal C}$ and $\mu^{\star}_{\mathcal C}|_{\mathcal M}$ is optimal over $\mathbb M_{\mathcal C}|_{\mathcal M}$ (due to equivalence of expected costs between $M_{\mathcal P}$ and $M_{\mathcal P}|_{\mathcal M}$).   Since $\mathbb M_{\phi}=\cup_{\mathcal C\in\mathcal A}\mathbb M_{\mathcal C}|_{\mathcal M}$, we have that $J^{\star}(s_{0})=\min_{\mathcal C\in \mathcal A}\lambda_{\mathcal C}$ and the policy corresponding to $\mathcal C^{\star}$ attaining this minimum is the optimal policy.
\end{proof}


We can relax the optimality conditions for $\mu^{\star}_{\circlearrowright\mathcal C}$ in Prop. \ref{prop:optimalPolicyOneAMEC} and require that there exist a state $i\in K_{\mathcal C}$ in one recurrent class of $\mu^{\star}_{\circlearrowright\mathcal C}$.  For such a policy, we can construct a policy such that it has one recurrent class containing state $i$, with the same ACPC cost at each state.   This construction is identical to a similar procedure for ACPS problems when the MDP is communicating (see \cite[p.~203]{bertsekas2007dynamic}).  We can then use \eqref{eq:optimalpolicy} to obtain the optimal policy $\mu^{\star}_{\mathcal C}$ for $\mathcal C$. 


We now present an algorithm (see Alg. \ref{alg:policyIteration}) that iteratively updates the policy in an attempt to find one that satisfies the optimality conditions given in Prop. \ref{prop:optimalPolicyOneAMEC}, for a given $\mathcal C\in \mathcal A$.  
Note that Alg. \ref{alg:policyIteration} is similar in nature to policy iteration algorithms for ACPS problems.  
\begin{algorithm}
\caption{: Policy iteration algorithm for ACPC}
\label{alg:policyIteration}
\begin{algorithmic}[1]
\REQUIRE{$\mathcal C=(S_{\mathcal C},U_{\mathcal C}, P_{\mathcal C}, K_{\mathcal C}, S_{\mathcal C\pi}, g_{\mathcal C})$}
\ENSURE{Policy $\mu_{\circlearrowright\mathcal C}$} 
\STATE Initialize $\mu^{0}$ to a proper policy containing $K_{\mathcal C}$ in its recurrent classes (such a policy can always be constructed since $\mathcal C$ is communicating)
\REPEAT
\STATE Given $\mu^{k}$, compute $J_{\mu^{k}}$ and $h_{\mu^{k}}$ with \eqref{eq:helpereqmu} and \eqref{eq:vmutilde}
\STATE Compute for all $i\in S_{\mathcal C}$:
\be
\label{eq:firstoptimization}
\bar U(i) = \argmin_{u\in U_{\mathcal C}(i)}\sum_{j\in S_{\mathcal C}} P(i,u,j)J_{\mu^{k}}(j)
\ee
\IF{$\mu^{k}(i)\in \bar U(i)$ for all $i\in S_{\mathcal C}$}
\STATE Compute, for all $i\in S_{\mathcal C}$:
\begin{multline}
\label{eq:secondoptimization}
\bar M(i)=\argmin_{u\in \bar U(i)}\bigg[g_{\mathcal
  C}(i,u)+\sum_{j\in S_{\mathcal C}} P(i,u,j)h_{\mu^{k}}(j) \\ 
+\sum_{j\notin S_{\mathcal C\pi}} P(i,u,j)J_{\mu^{k}}(j)\bigg]
\end{multline}

\STATE Find $\mu^{k+1}$ such that $\mu^{k+1}(i)\in \bar M(i)$ for all $i\in S_{\mathcal C}$, and contains a state of $K_{\mathcal C}$ in its recurrent classes.  If one does not exist. {\bf Return:} $\mu^{k}$ with ``not optimal''
\ELSE
\STATE Find $\mu^{k+1}$ such that $\mu^{k+1}(i)\in \bar U(i)$ for all $i\in S_{\mathcal C}$, and contains a state of $K_{\mathcal C}$ in its recurrent classes.  If one does not exist, {\bf Return:} $\mu^{k}$ with ``not optimal''
\ENDIF
\STATE Set $k \leftarrow k+1$
\UNTIL{$\mu^{k}$ with gain-bias pair satisfying \eqref{eq:bellmanpercycleAMEC} and {\bf Return:} $\mu^{k}$ with ``optimal''}
\end{algorithmic}
\end{algorithm}
\begin{proposition}
  Given $\mathcal C$, Alg. \ref{alg:policyIteration} terminates in a finite number of
  iterations.  If it returns policy
  $\mu_{\circlearrowright\mathcal C}$ with ``optimal'', then
  $\mu_{\circlearrowright\mathcal C}$ satisfies the optimality
  conditions in Prop. \ref{prop:optimalPolicyOneAMEC}. If $\mathcal C$
  is unichain (\ie each stationary policy of $\mathcal C$ contains one recurrent class), then Alg. \ref{alg:policyIteration} is guaranteed to
  return the optimal policy $\mu^{\star}_{\circlearrowright\mathcal C}$.
\end{proposition}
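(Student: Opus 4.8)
The plan is to treat Algorithm~\ref{alg:policyIteration} as a constrained variant of policy iteration for the average-cost-per-stage problem and to transfer the classical convergence argument through the ACPC--ACPS reduction of Sec.~\ref{sec:sub:optimalACPC}. Throughout I would carry as a loop invariant that $\mu^{k}$ is proper and keeps a state of $K_{\mathcal C}$ in one of its recurrent classes. Properness is what makes the evaluation step well defined: by Prop.~\ref{prop:matrixI-PmuInvertible} it guarantees that $I-\overrightarrow P_{\mu^{k}}$ is invertible, so the triple $(J_{\mu^{k}},h_{\mu^{k}},v_{\mu^{k}})$ is uniquely recovered from \eqref{eq:helpereqmu} and \eqref{eq:vmutilde}. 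The invariant holds at Step~1, and each branch of the loop re-selects $\mu^{k+1}$ precisely so as to preserve it, returning ``not optimal'' exactly when no such selection exists.

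For termination together with the ``optimal'' conclusion, I would show that $(J_{\mu^{k}},h_{\mu^{k}})$ is nonincreasing in the lexicographic order and strictly decreases whenever the loop does not halt. The standard two-step improvement argument applies: if $\mu^{k}(i)\in\bar U(i)$ for all $i$, minimizing \eqref{eq:secondoptimization} over $\bar U(i)$ cannot raise $h$ while $J$ is held fixed; if instead some $\mu^{k}(i)\notin\bar U(i)$, choosing an action from $\bar U(i)$ strictly lowers $J$ at some state. Since $S_{\mathcal C}$ and $U_{\mathcal C}$ are finite there are only finitely many stationary policies, and strict lexicographic improvement forbids repetition, so the loop stops after finitely many iterations. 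When it exits through the \texttt{UNTIL} test, $\mu^{k}$ satisfies \eqref{eq:bellmanpercycleAMEC}, which is exactly optimality condition (i) of Prop.~\ref{prop:optimalPolicyOneAMEC}, and the invariant supplies condition (ii) in its relaxed form; hence the returned policy meets both hypotheses of that proposition.

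For the unichain case I would first exploit that every stationary $\mu$ then has a constant gain $J_{\mu}=\lambda_{\mu}\bo$. Consequently $\sum_{j\in S_{\mathcal C}}P(i,u,j)J_{\mu^{k}}(j)=\lambda_{\mu^{k}}$ is independent of $u$, so \eqref{eq:firstoptimization} yields $\bar U(i)=U_{\mathcal C}(i)$ for every $i$; the test $\mu^{k}(i)\in\bar U(i)$ always passes and the algorithm permanently enters the \texttt{IF} branch. There the update \eqref{eq:secondoptimization} collapses to the ordinary bias-improvement step of average-cost policy iteration on the transformed chain $\tilde\mu^{k}$, whose convergence to a solution of the Bellman equation is classical \cite{bertsekas2007dynamic}. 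Combined with the strict-improvement bookkeeping above, this forces termination at a policy satisfying \eqref{eq:bellmanpercycleAMEC}, i.e.\ at $\mu^{\star}_{\circlearrowright\mathcal C}$.

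The crux, and the step I expect to be the main obstacle, is ruling out the ``not optimal'' exit in the unichain case: I must show that an improving policy drawn from $\bar M(i)$ can always be chosen so that its single recurrent class contains a state of $K_{\mathcal C}$. Here I would lean on the fact that $\mathcal C$ is communicating, so that the freedom to re-route on transient states lets one build a unichain policy whose recurrent class contains any prescribed state of $S_{\mathcal C}$ without disturbing the gain, using the construction for communicating MDPs in \cite[p.~203]{bertsekas2007dynamic}. Making this re-routing compatible with the improvement set $\bar M(i)$ --- so that optimality is preserved while the $K_{\mathcal C}$ constraint is enforced --- is the delicate part; once it is in place, the ``not optimal'' branch is never reached and the algorithm is guaranteed to return $\mu^{\star}_{\circlearrowright\mathcal C}$.
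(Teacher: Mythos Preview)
Your general strategy for termination and for the ``optimal'' return---maintaining a properness/$K_{\mathcal C}$ loop invariant, establishing finiteness via strict lexicographic improvement of $(J_{\mu^k},h_{\mu^k})$, and deferring the convergence mechanics to ACPS policy iteration---matches the paper's, which is even terser: it simply cites \cite[pp.~237--239]{bertsekas2007dynamic} for the improvement argument and notes that the terminal condition is \eqref{eq:bellmanpercycleAMEC} in place of \eqref{eq:bellmanacps1}--\eqref{eq:bellmanacps2}. Your explicit invariant and bookkeeping spell out what the paper leaves to that citation.

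The substantive divergence is in the unichain case. The paper does not attempt your re-routing construction at all. It argues directly that when $\mathcal C$ is unichain \emph{and} communicating, the policy under consideration has a single recurrent class and no transient states, so every state of $S_{\mathcal C}$---in particular every state of $K_{\mathcal C}$---is recurrent; hence the selection constraint at Steps~7 and~9 is vacuously met and the ``not optimal'' exit can never fire. This dispatches in one line what you identify as the crux, without ever confronting the compatibility of a re-routing with the improvement set $\bar M(i)$. Your route is more cautious but, as you concede, leaves that compatibility step open; the paper's shortcut is cleaner, though its key structural premise (unichain plus communicating forces the recurrent class to exhaust $S_{\mathcal C}$) is asserted rather than proved, so your instinct that something nontrivial is hiding here is not unreasonable.
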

\begin{proof}
If $\mathcal C$ is unichain, then since it is also communicating,  $\mu^{\star}_{\circlearrowright\mathcal C}$ contains a single recurrent class (and no transient state).  In this case, since $K_{\mathcal C}$ is not empty, states in $K_{\mathcal C}$ are recurrent and the LTL constraint is always satisfied at step $7$ and $9$ of Alg. \ref{alg:policyIteration}.  The rest of the proof (for the general case and not assuming $C$ to be unichain) is similar to the proof of convergence for the policy iteration algorithm for the ACPS problem (see \cite[pp.~237-239]{bertsekas2007dynamic}).  Note that the proof is the same except that when the algorithm terminates at step $11$ in Alg. \ref{alg:policyIteration}, $\mu^{k}$ satisfies \eqref{eq:bellmanpercycleAMEC} instead of the optimality conditions for the ACPS problem (\eqref{eq:bellmanacps1} and \eqref{eq:bellmanacps2}).
\end{proof}

If we obtain the optimal policy for each $\mathcal C\in \mathcal A$, then we use \eqref{eq:optimalcostoverall} to obtain the optimal solution for Prob. \ref{prob:mainprob}.  If for some $\mathcal C$, Alg. \ref{alg:policyIteration} returns ``not optimal'', then the policy returned by Alg. \ref{alg:policyIteration} is only sub-optimal.  We can then apply this algorithm to each AMEC in $\mathcal A$ and use \eqref{eq:optimalcostoverall} to obtain a sub-optimal solution for Prob. \ref{prob:mainprob}.  Note that similar to policy iteration algorithms for ACPS problems, either the gain or the bias strictly decreases every time when $\mu$ is updated, so policy $\mu$ is improved in each iteration.   In both cases, the satisfaction constraint is always enforced.
%

\begin{remark}[Complexity]
The complexity of our proposed algorithm is dictated by the size of the generated MDPs.  We use $|\cdot|$ to denote cardinality of a set.  The size of the DRA ($|Q|$) is in the worst case, doubly exponential with respect to $|\Sigma|$.  However, empirical studies such as \cite{klein2006experiments} have shown that in practice, the sizes of the DRAs for many LTL formulas are generally much lower and manageable. The size of product MDP $\mathcal P$ is at most $|\mathcal S|\times|Q|$.  The complexity for the algorithm generating AMECs is at most quadratic in the size of $\mathcal P$ \cite{baier2008principles}.  The complexity of Alg. \ref{alg:policyIteration} depends on the size of $\mathcal C$.   The policy evaluation (step $3$) requires solving a system of $3\times |S_{\mathcal C}|$ linear equation with $3\times |S_{\mathcal C}|$ unknowns.  The optimization step (step $4$ and $6$) each requires at most $|U_{\mathcal C}|\times |S_{\mathcal C}|$ evaluations.  Checking the recurrent classes of $\mu$ is linear in $|S_{\mathcal C}|$.  Therefore, assuming that $|U_{\mathcal C}|$ is dominated by $|S_{\mathcal C}|^{2}$ (which is usually true) and the number of policies satisfying \eqref{eq:firstoptimization} and \eqref{eq:secondoptimization} for all $i$ is also dominated by $|S_{\mathcal C}|^{2}$, for each iteration, the computational complexity is $O(|S_{\mathcal C}|^{3})$.
\end{remark}

\section{Case study}
\label{sec:casestudy}
The algorithmic framework developed in this paper is implemented in
MATLAB, and here we provide an example as a case study.  Consider the
MDP $\mathcal M$ shown in Fig.~\ref{fig:MDPexample}, which can be
viewed as the dynamics of a robot navigating in an environment with
the set of atomic propositions $\{\mathtt{pickup}
,\mathtt{dropoff}\}$.  In practice, this MDP can be obtained via an abstraction process (see \cite{LaWaAnBe-ICRA10}) from the environment, where its probabilities of transitions can be obtained from experimental data or accurate simulations.  
\begin{figure}[h]
\begin{center}
\includegraphics[scale=0.35]{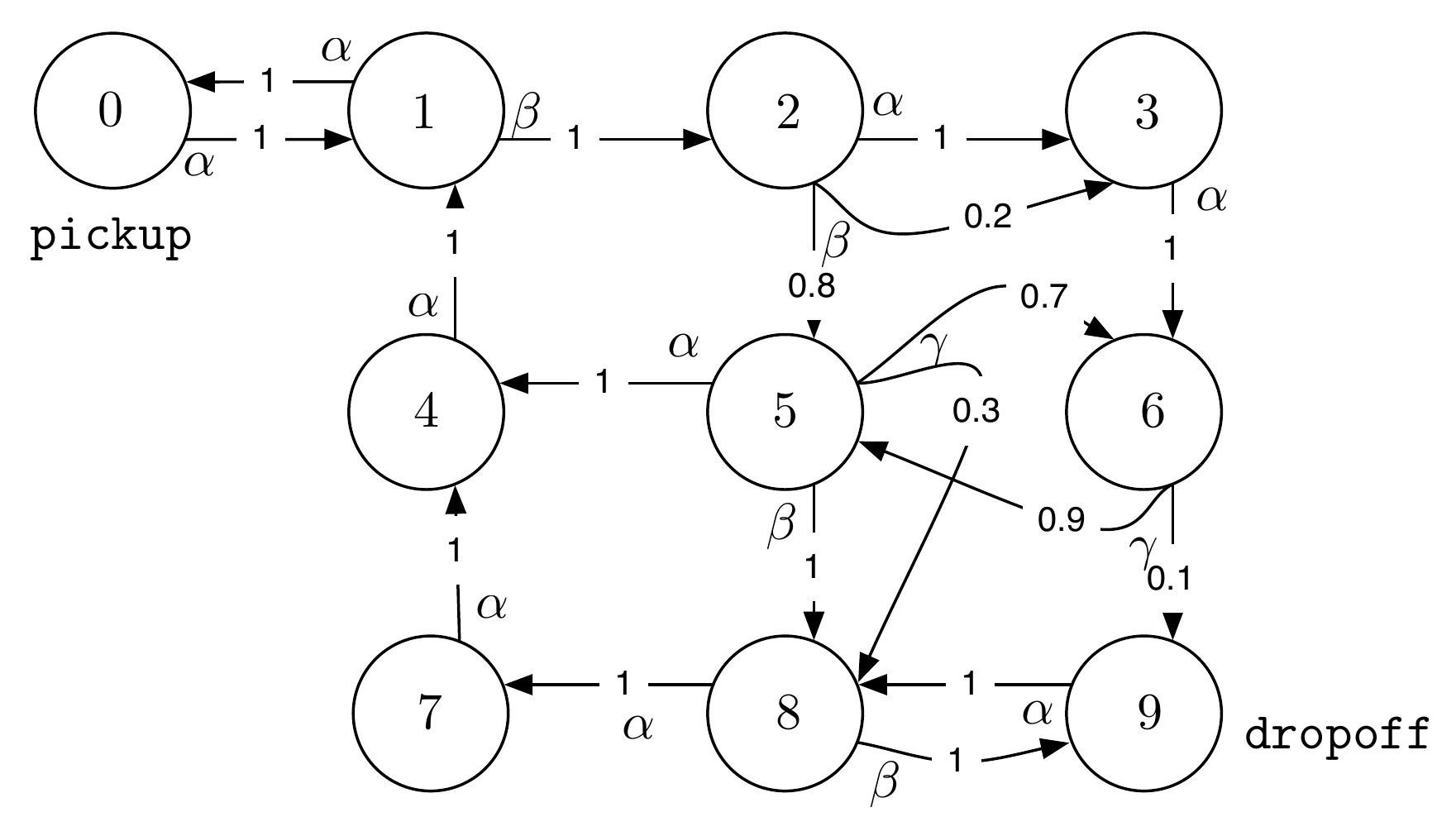}
\caption{MDP capturing a robot navigating in an environment.  $\{\alpha, \beta, \gamma\}$ is the set of controls at states.  The cost of applying $\alpha, \beta, \gamma$ at a state where the control is available is $5,10,1$, respectively. (\eg $g(i,\alpha)=5$ if $\alpha\in U(i)$)}%
\label{fig:MDPexample}
\end{center}
\end{figure}

The goal of the robot is to continuously
perform a pickup-delivery task.  The robot is required to pick up
items at the state marked by $\mathtt{pickup}$ (see
Fig. \ref{fig:MDPexample}), and drop them off at the state marked by
$\mathtt{dropoff}$.  It is then required to go back to
$\mathtt{pickup}$ and this process is repeated.
This task can be written as the following LTL formula:
\ben
\label{eq:exampleformula}
\phi=\gl\ev \mathtt{pickup} \andltl \gl (\mathtt{pickup} \Rightarrow \nextltl (\notltl \mathtt{pickup} \un \mathtt{dropoff})).
\een
The first part of $\phi$, $\gl\ev \mathtt{pickup}$, enforces that the robot repeatedly pick up items.  The remaining part of $\phi$ ensures that new items cannot be picked up until the current items are dropped off.  We denote $\mathtt{pickup}$ as the optimizing proposition, and the goal is to find a policy that satisfies $\phi$ with probability $1$ and minimizes the expected cost in between visiting the $\mathtt{pickup}$ state (\ie we aim to minimize the expected cost in between picking up items).

We generated the DRA $\mathcal R_{\phi}$ using the ltl2dstar tool
\cite{ltl2dstar} with $13$ states and $1$ pair $(L,K)\in F$.  The
product MDP $\mathcal P$ after removing unreachable states contains
$31$ states (note that $\mathcal P$ has $130$ states without removing
unreachable states).  There is one AMEC $\mathcal C$ corresponding to
the only pair in $F_{\mathcal P}$ and it contains $20$ states.  We
tested Alg. \ref{alg:policyIteration} with a number of different
initial policies and Alg. \ref{alg:policyIteration} produced the
optimal policy within $2$ or $3$ policy updates in each case (note
that $\mathcal C$ is not unichain).  For one initial policy, the ACPC
was initially $330$ at each state of $\mathcal C$, and it was reduced
to $62.4$ at each state when the optimal policy was found.  The
optimal policy
is as follows: 
\begin{tabular}[c]{|@{ }c@{ }|l|l|l|l|l|l|l|l|l|l|}
\noalign{\smallskip} \hline 
\small State & \small 0 & \small 1& \small 2 & \small 3 & \small 4 & \small 5 & \small 6 & \small 7& \small 8 & \small 9 \\
\hline
\small After $\mathtt{pickup}$ & \small $\alpha$ & \small $\beta$ & \small $\alpha$ & \small $\alpha$ & \small $\alpha$ &
\small $\gamma$ & \small $\gamma$ & \small $\alpha$ & \small $\beta$ & \small $\alpha$ \\
\hline
\small After $\mathtt{dropoff}$ & \small $\alpha$ & \small $\alpha$ & \small $\alpha$ & \small $\alpha$ & \small $\alpha$ &
\small $\alpha$ & \small $\gamma$ & \small $\alpha$ & \small $\alpha$ & \small $\alpha$\\
\hline \noalign{\smallskip}
\end{tabular}
\noindent The first row of the above table shows the policy after pick-up but before drop-off and the second row shows the policy after drop-off and before another pick-up. 

\section{Conclusions}
\label{sec:conc}

We have developed a method to automatically generate a control policy
for a dynamical system modelled as a Markov Decision Process (MDP), in
order to satisfy specifications given as Linear Temporal Logic
formulas. The control policy satisfies the given specification almost
surely, if such a policy exists. In addition, the policy optimizes the
average cost between satisfying instances of an ``optimizing
proposition'', under some conditions. The problem is motivated by robotic applications
requiring persistent tasks to be performed such as environmental
monitoring or data gathering.

We are currently pursuing several future directions.  First, we aim to solve the problem completely and find an algorithm that guarantees to always return the optimal policy.   Second, we are interested to apply the optimization criterion of average cost per cycle to more complex models such as Partially Observable MDPs (POMDPs) and semi-MDPs.  


\end{document}